\documentclass[conference]{IEEEtran}
\IEEEoverridecommandlockouts
\usepackage{cite}
\usepackage{amsmath,amssymb,amsfonts}
\usepackage{algorithmic}
\usepackage{textcomp}
\usepackage{xcolor}
\usepackage{placeins}
\usepackage{soul}
\def\BibTeX{{\rm B\kern-.05em{\sc i\kern-.025em b}\kern-.08em
    T\kern-.1667em\lower.7ex\hbox{E}\kern-.125emX}}

\usepackage{graphicx}
\usepackage[caption=false,font=footnotesize]{subfig}
\usepackage{booktabs}
\usepackage{adjustbox}
\usepackage[a4paper,margin=1in]{geometry}
\usepackage[T1]{fontenc}
\usepackage[utf8]{inputenc}
\usepackage{lmodern}
\usepackage{microtype}
\usepackage{amsmath,amssymb,amsthm,mathtools}
\usepackage{enumitem}
\usepackage{xcolor}
\usepackage{hyperref}
\usepackage[nameinlink,noabbrev,capitalize]{cleveref}

\hypersetup{
  colorlinks=true,
  linkcolor=blue,
  citecolor=blue,
  urlcolor=blue
}

\newtheorem{theorem}{Theorem}
\newtheorem{lemma}{Lemma}
\newtheorem{corollary}{Corollary}
\theoremstyle{definition}
\newtheorem{definition}{Definition}
\newtheorem{remark}{Remark}
\newcommand{\R}{\mathbb{R}}
\newcommand{\Sph}{\mathbb{S}}

\newcommand{\angles}[2]{\angle\!\left(#1,#2\right)}

\newcommand{\covrad}{\operatorname{covrad}}

\newcommand{\Pcode}{\mathcal P}
\newcommand{\fp}{\mathrm{fp}}

\DeclareMathOperator*{\argmin}{argmin}

\begin{document}
\pagestyle{plain}
\title{Direction-Preserving Number Representations}
\author{
\IEEEauthorblockN{Bardia Zadeh\IEEEauthorrefmark{1},
George A. Constantinides\IEEEauthorrefmark{1}}

\IEEEauthorblockA{\IEEEauthorrefmark{1}
Department of Electrical and Electronic Engineering\\
Imperial College London\\
London, United Kingdom\\
Email: \{bm1220, g.constantinides\}@imperial.ac.uk \\
ORCID: 0009-0001-7766-7060, 0000-0002-0201-310X}
}

\maketitle
\begin{abstract}
Low-precision number formats are widely used in modern machine learning systems due to their efficiency. Accurate direction representation is key to the accuracy of vector operations. This work precisely explores the extent to which the direction of a vector can be represented by selecting its scalar elements from a common finite alphabet of a given size. This is standard practice in machine learning, where low-precision significands may be narrow-width floating-point or integer values. A geometric framework is introduced for analyzing the directional coverage of such product-structured codes. 

This work analytically quantifies the suboptimality gap between such product-structured codes and spherical codes for the vector as a whole, in both low and asymptotically high dimensions. Furthermore, within the product code class, it is proven that the standard formats of two's complement, fixed-point, and floating-point are suboptimal, again with quantified gap, pointing to the potential to develop new scalar number formats.

Such scalar alphabets are numerically optimized across multiple block dimensions for directional coverage, including the dimension used in NVIDIA’s NVFP4 format. Experimental results are presented comparing the performance of standard formats and the optimized alphabet. We find that for four bits, NVIDIA's choice of E2M1 closely approximates the optimized alphabet, providing a geometric explanation for its strong performance in low-precision machine learning workloads and an analytical understanding of the link between that superiority and block size.

We provide open-source formal proofs in Lean for the theorems in this work, along with the experimental code and the optimized alphabets obtained.
\end{abstract}

\section{Introduction}
\label{sec:intro}

\begin{figure}
    \centering
    \includegraphics[width=0.99\linewidth]{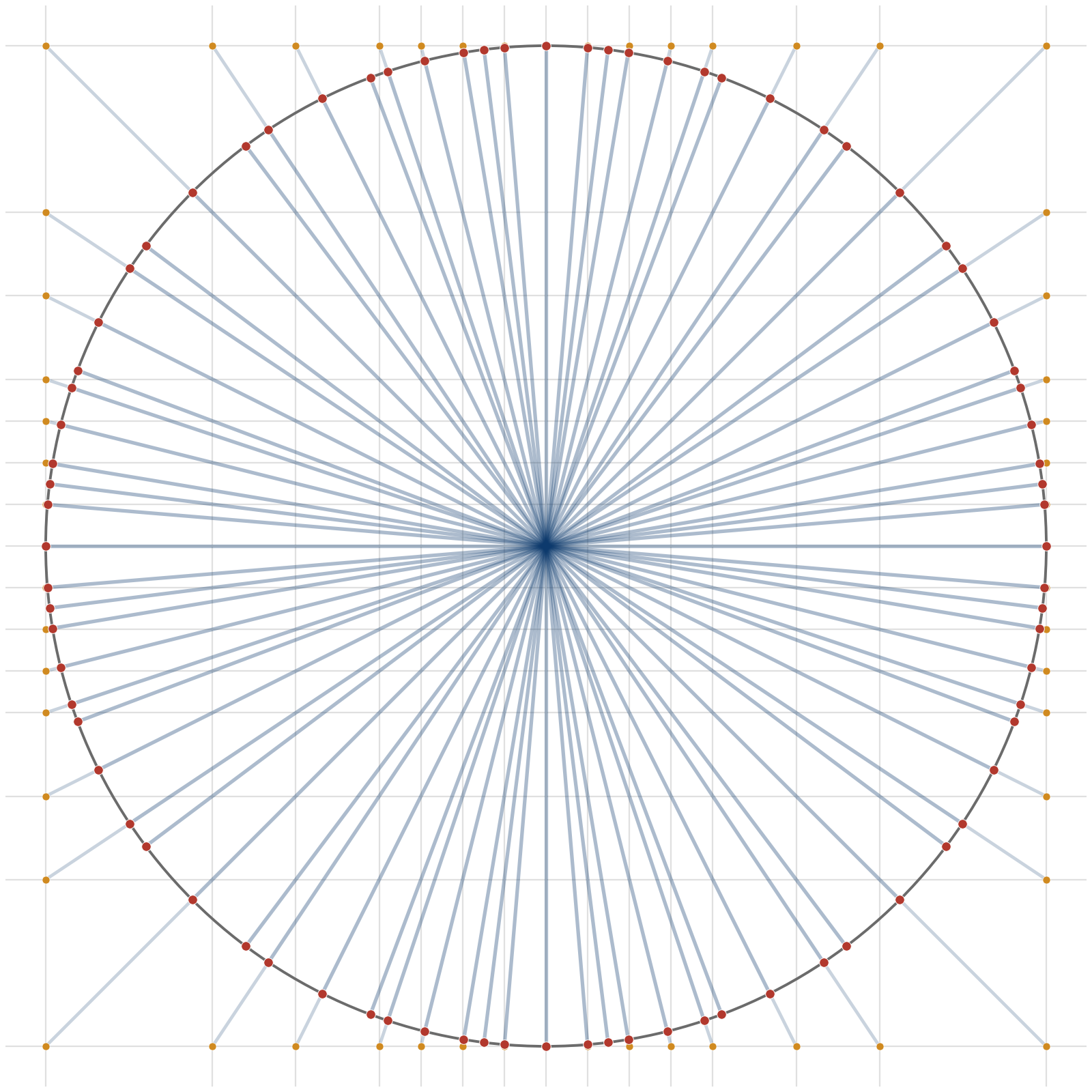}
    \caption{The directional coverage for a product code formed from two elements in the 4-bit E2M1 format. Each intersection of the grid lines corresponds to a representable direction through the product structure, projected onto the circle as a red dot. It can be seen that the red dots are not equally spaced in angle, leading to gaps in angular coverage.}
    \label{fig:fp4-e2m1-2d}
\end{figure}
The increasing computational demands of modern machine learning systems have driven widespread adoption of low-precision numerical representations during both inference and training. Low-precision representations enable significant savings in power consumption and resource utilization along with increases in throughput, often with minimal degradation in model task accuracy~\cite{quant_inf_survey},~\cite{nvfp4_training},~\cite{gholami2022survey}.

To enable low-precision representation, per-element scalar quantization is commonly used in which real-valued vectors are mapped elementwise to a scalar alphabet, resulting in a product code of quantized elements. 

When evaluating these quantized vectors, most work focuses on the mean-squared error or other norm-based metrics~\cite{quant_inf_survey}. However, the new industrial dominance of block-based number representations such as block minifloat~\cite{block-minifloat}, MX~\cite{ocp_mx}, and the draft IEEE P3109 standard~\cite{IEEE_SA_P3109_Working_Group_Interim_Report_on_2026}, where scaling of a (sub)vector is recorded separately from the vector itself, suggests an alternative metric: directional fidelity. 

The product structure of the quantized vectors fundamentally restricts the geometry of the representation space: the set of achievable directions is not arbitrary, but is instead determined by the scalar alphabet. Figure~\ref{fig:fp4-e2m1-2d} shows the coverage of a product code formed by two 4-bit E2M1\footnote{E2M1 refers to a floating-point format with exponent width 2 and mantissa width 1. This format is used in NVFP4~\cite{nvfp4_dtype}. E2M1 does not support infinities or NaN values, and is part of the OCP standard~\cite{ocp_mx}.} elements.

These observations motivate a shift in perspective for the design of number systems themselves: rather than focusing on traditional questions of dynamic range and precision, we ask whether \textit{scalar representations can be designed to preserve vector directions when those scalars populate the elements of a vector}. 
We offer the following contributions in this work:
\begin{enumerate}
    \item An analytical proof of the angular error gap, in both low and high dimensions, between such product-structured codes and optimal spherical codes.
    \item An analytical proof that among all product codes for a given bit-width, two's complement, fixed-point, and floating-point are suboptimal, with quantified suboptimality gap. 
    \item A numerical optimization approach to obtain a direction-preserving alphabet, and an empirical evaluation of contemporary 4-bit formats against this benchmark.
    \item Open-source access to the following: proofs of our theorems in Lean, experimental code for evaluating various formats, and the optimized number formats themselves\footnote{https://github.com/bardia01/Direction-Preserving-Number-Representations}.
\end{enumerate}

\section{Background and Related Work}

\subsection{Scalar Quantization of Vectors}
\label{sec:scalar_quant}
Scalar quantization applied to a vector independently maps each element to a discrete set of values. Formally, given a vector $x \in \mathbb{R}^d$, we apply a scalar quantizer $Q(\cdot)$ element-wise, producing a quantized vector $\hat{x}$ such that $\hat{x}_i = Q(x_i)$ for each dimension $i$. The quantizer $Q(\cdot)$ can be uniform, or non-uniform, symmetric or asymmetric~\cite{gholami2022survey}. Within the scope of this work, a scalar quantizer maps an element to its nearest neighbor in the alphabet: $Q : \mathbb{R} \to \mathcal{S}$, where $\mathcal{S} \subset \mathbb{R}$ denotes the set of representable values. We assume the existence of a tie-break rule, but the nature of that rule is irrelevant to this work. Thus we may write:
\[
Q(x) = \arg\min_{s \in \mathcal{S}} |x - s|.
\]
The resulting quantized vector is drawn from the Cartesian product $\mathcal{S}^n$; in this sense, it forms a product code.
The set of values a scalar quantizer can map to is referred to as the scalar alphabet. 

\subsection{Spherical codes}
\label{sec:spherical_code}
Spherical codes~\cite{wyner-sphere-packing} are sets of points distributed on the surface of a unit sphere. In this work, spherical codes serve as an unconstrained benchmark. For a fixed number of representable directions, a spherical code may place points arbitrarily on the sphere and therefore captures the best achievable directional coverage. In contrast, the product code direction sets considered in this paper are constrained in their angular coverage capabilities due to their construction by per-coordinate scalar choices drawn from a shared finite alphabet.

\subsection{Shape--gain quantization and block-formats}
The decomposition of a vector into its magnitude (gain) and direction (shape) is a classical approach in vector quantization~\cite{gray_vq}. Shape--gain quantizers encode the two components separately, typically using scalar quantization for the gain and spherical codes for the shape component. 

By factoring out a common scale across groups of elements, block-based number representations~\cite{block-minifloat},~\cite{ocp_mx},~\cite{IEEE_SA_P3109_Working_Group_Interim_Report_on_2026} may be viewed as instances of shape--gain quantization in which the shape component is constrained to a product-structured code.

In this paper we investigate the implications of using product structure to represent the shape component, as opposed to the standard practice of using spherical codes.

\subsection{Direction Preservation in Low-Precision Representations}

Recent work has increasingly highlighted the importance of preserving directional information in low-precision machine learning workloads. Several works explicitly incorporate direction or inner-product preservation into the quantization objective. TurboQuant~\cite{turboquant} includes a secondary optimization stage that directly minimizes inner-product error between original and quantized vectors. PolarQuant~\cite{polarquant} parametrizes vectors in polar coordinates and quantizes magnitude and direction separately to better preserve angular structure, similarly to traditional shape-gain quantizers.

These approaches focus on designing quantization schemes that preserve directional properties. Instead, we analytically characterize and study the representational limits imposed by a fixed class of quantizers, namely those derived from a product structure of scalar alphabets. Rather than proposing a new quantization algorithm, we analyze how well such representations can approximate directions in the worst case, thereby providing a complementary, geometry-driven perspective.

\section{Definitions}
The theoretical analysis begins by introducing key definitions and some general remarks. This section aims to introduce the geometric framework for analyzing product code directional coverage.

\begin{definition}[Finite scalar alphabets]
For $q\ge 2$, let
\[
\mathcal A_q:=\{A\subset \R: A \text{ is finite and } |A|=q\}.
\]
\end{definition}

\begin{definition}[Product code direction set]
\label{def:prod_dir}
Let $n\ge 2$ and let $A\subset \R$ be finite. Define
\[
\Pcode_n(A)
:=
\left\{
\frac{x}{\|x\|_2}:x\in A^n\setminus\{0\}
\right\}
\subset \Sph^{n-1},
\]
where $\Sph^{n-1}$ denotes the $(n-1)$-dimensional sphere, embedded in $n$-dimensional ambient space.
\end{definition}

\begin{definition}[Product code covering objective]
\label{def:prod-code-objective}
For $n\ge 2$ and finite $A\subset \R$, define
\[
F_n(A)
:=
\sup_{u\in \Sph^{n-1}}
\min_{c\in \Pcode_n(A)} \angles{u}{c}.
\]
Equivalently, $F_n(A)$ is the spherical covering radius of the finite code
$\Pcode_n(A)$. 
\end{definition}

\begin{definition}[Best achievable value at fixed alphabet size]
For $n,q\ge 2$, define
\[
w_{n,q}:=\inf\{F_n(A): A\in \mathcal A_q\}.
\]
\end{definition}

\begin{remark}[Scaling invariance]
\label{rem:scaling_invariance}
If $\lambda>0$, then
\[
\Pcode_n(\lambda A)=\Pcode_n(A)
\qquad\text{and}\qquad
F_n(\lambda A)=F_n(A),
\]
because
\[
\frac{\lambda x}{\|\lambda x\|_2}=\frac{x}{\|x\|_2}
\qquad (x\neq 0).
\]
Thus only the relative ratios inside $A$ matter geometrically.
\end{remark}

\begin{definition}[Optimal spherical covering radius]
For $n\ge 2$ and $m\ge 1$, define
\[
\rho_{\mathrm{sph}}(n,m)
:=
\inf\{
\covrad(C): C\subset \Sph^{n-1},\ |C|=m
\},
\]
where
\[
\covrad(C):=
\sup_{u\in \Sph^{n-1}}\min_{c\in C}\angles{u}{c}.
\]
\end{definition}

\section{Two-dimensional classification}
\label{sec:2d-class}
First, angular coverage is classified in two dimensions, analyzing the limitations of product codes relative to spherical codes in the minimal setting where this is possible.

\begin{lemma}[Exact spherical optimum on the circle]
\label{prop:circle}
For every $m\ge 1$,
\[
\rho_{\mathrm{sph}}(2,m)=\frac{\pi}{m}.
\]
More generally, if $C\subset \Sph^1$ has $m$ points, then
\[
\covrad(C)\ge \frac{\pi}{m},
\]
with equality if and only if the points of $C$ are equally spaced.
\end{lemma}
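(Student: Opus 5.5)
We parametrize $\Sph^1$ by angle and reduce everything to the arc gaps between consecutive points. Let $C=\{c_1,\dots,c_m\}\subset\Sph^1$ have angular positions $0\le\theta_1<\theta_2<\dots<\theta_m<2\pi$. Define the cyclic gaps $g_i:=\theta_{i+1}-\theta_i$ for $i<m$ and $g_m:=2\pi+\theta_1-\theta_m$. These satisfy $g_i>0$ and $\sum_i g_i=2\pi$. For $m=1$ we set $g_1=2\pi$.

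The first key step is the identity
\[
\covrad(C)=\max_i \frac{g_i}{2}.
\]
The angle between two points of $\Sph^1$ is their geodesic distance, namely $\min(|\alpha-\beta|,2\pi-|\alpha-\beta|)$. Any $u\in\Sph^1$ lies on the closed arc of some gap $g_i$. The nearest code point to $u$ is then one of that arc's endpoints, at distance $\min(t,g_i-t)\le g_i/2$, where $t$ is the offset of $u$ from the arc's start. This is the step needing the most care: one must check that a code point outside the arc cannot be closer, since any geodesic from $u$ must exit the arc through one of its endpoints. For $m=1$, the antipode gives distance exactly $\pi$. Conversely, the midpoint of the largest arc attains $\max_i g_i/2$. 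The supremum is therefore attained and equals this maximum.

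The bound then follows by averaging: $\max_i g_i\ge \frac1m\sum_i g_i=\frac{2\pi}{m}$, hence $\covrad(C)\ge \pi/m$. Equality holds iff $\max_i g_i$ equals the mean of the $g_i$. Since every $g_i$ is at most the maximum, this happens iff all $g_i=2\pi/m$, i.e.\ iff the points are equally spaced. Conversely, the regular $m$-gon has all gaps $2\pi/m$ and covering radius exactly $\pi/m$. So the infimum defining $\rho_{\mathrm{sph}}(2,m)$ is attained and equals $\pi/m$.

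We expect no real obstacle. The only delicate point is the rigorous justification of the nearest-point claim in the first step, together with the degenerate case $m=1$ (and $m=2$, where the gaps may be unequal). This can be handled uniformly by the arc-exit argument above.
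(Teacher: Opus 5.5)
Your proof is correct and follows the same route as the paper: reduce to the cyclic gaps, use $\max_i g_i\ge 2\pi/m$, and take the midpoint of a largest gap as the hard direction. Your identity $\covrad(C)=\max_i g_i/2$ also makes explicit the upper bound $\covrad(C)\le \max_i g_i/2$. The paper's proof uses that bound only implicitly, both for the ``if'' direction of the equality case and for showing that $\pi/m$ is attained.
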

\begin{proof}
    \renewcommand{\qedsymbol}{}
    See Appendix \ref{app:2d-classification}.
\end{proof}

\begin{theorem}[Complete dimension-$2$ classification]
\label{thm:dim2-classification}
Let $A\subset \R$ be finite with $q:=|A|\ge 2$. Then exactly one of the
following holds.
\begin{enumerate}[label=(\roman*),nosep]
    \item $A=\{-a,a\}$ for some $a>0$, and then
    \[
    F_2(A)=\rho_{\mathrm{sph}}(2,q^2)=\rho_{\mathrm{sph}}(2,4)=\frac{\pi}{4};
    \]
    \item $A\ne\{-a,a\}$ for every $a>0$, and then
    \[
    \rho_{\mathrm{sph}}(2,q^2)<F_2(A).
    \]
\end{enumerate}
\end{theorem}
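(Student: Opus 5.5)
The plan is to reduce everything to Lemma~\ref{prop:circle}: the number of distinct directions in $\Pcode_2(A)$ is at most $q^2-1$, fewer than $q^2$ whenever $0\in A$, and the covering radius of any set on the circle is at least $\pi$ divided by its cardinality, with equality only for equal spacing.

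\textbf{Case (i).} For $A=\{-a,a\}$, $\Pcode_2(A)$ consists of the four directions $(\pm1,\pm1)/\sqrt2$. These are equally spaced at angles $\pi/4+k\pi/2$, so Lemma~\ref{prop:circle} gives $F_2(A)=\pi/4=\rho_{\mathrm{sph}}(2,4)$, and $q^2=4$.

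\textbf{Case (ii).} Let $C=\Pcode_2(A)$ and $m=|C|$. By definition $F_2(A)=\covrad(C)\ge \pi/m$, and $m\le q^2$ since $C$ is the image of $A^2\setminus\{0\}$. It therefore suffices to show that either $m<q^2$, or $m=q^2$ and $C$ is not equally spaced. Either way, Lemma~\ref{prop:circle} gives the strict inequality $F_2(A)>\pi/q^2$, because a non-equally-spaced $m$-set has $\covrad>\pi/m\ge\pi/q^2$. We separate the possibilities for $A$.
\begin{enumerate}[label=(\alph*),nosep]
\item If $0\in A$, then $A^2\setminus\{0\}$ has $q^2-1$ elements, so $m\le q^2-1$.
\item If $A$ contains two positive elements $a<b$, then $(a,a)$ and $(b,b)$ give the same direction, so $m<q^2$. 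The same holds for two negative elements.
\end{enumerate}
The remaining alphabets have no zero and at most one positive and one negative element, i.e.\ $A=\{-b,a\}$ with $a,b>0$ and $a\ne b$ (since $q\ge 2$). All four vectors then give distinct directions: $(a,a)$, $(-b,-b)$, $(a,-b)$ and $(-b,a)$, making angles $\pi/4$, $5\pi/4$, $-\theta$ and $\pi-\theta'$ respectively. Here $(a,a)$ and $(-b,-b)$ are antipodal, so the gap between them going through $(a,-b)$ is $\pi$. Equal spacing of four points would require every consecutive gap to equal $\pi/2$, so $(a,-b)$ would have to lie exactly at angle $-\pi/4$, i.e.\ $a=b$. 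That contradicts $a\ne b$, so the four points are not equally spaced and the strict inequality follows.

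The main care point is the bookkeeping that ensures $m\le q^2$ together with strictness. Formally, one could argue that if $F_2(A)=\pi/q^2$, then Lemma~\ref{prop:circle} (applied to $m\le q^2$ points) forces $m=q^2$ with equal spacing, which the case analysis above rules out. The cases are exhaustive and, with (i), mutually exclusive, giving the dichotomy.
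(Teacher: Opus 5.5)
Your proof is correct and follows the paper's route. Case (i) is a direct computation. In case (ii), a collision count ($0\in A$, or two elements of the same sign) gives $|\Pcode_2(A)|\le q^2-1$, and the residual alphabet $\{-b,a\}$ with $a\ne b$ is handled by non-equal spacing via Lemma~\ref{prop:circle}.

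In that last case your argument is actually more careful than the paper's Lemma~\ref{prop:binary-case}. That lemma asserts that $(-u,v)$ and $(v,-u)$ form an antipodal pair, which holds only when $u=v$. You instead use only the genuinely antipodal pair $(a,a),(-b,-b)$ and locate $(a,-b)$ strictly inside one of the semicircles they bound.
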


\begin{proof}
If $A=\{-a,a\}$, then $q=2$ and \cref{prop:binary-case} 
gives
\[
F_2(A)=\frac{\pi}{4}=\rho_{\mathrm{sph}}(2,4)=\rho_{\mathrm{sph}}(2,q^2).
\]

Now assume $A\ne\{-a,a\}$ for every $a>0$. If one of the three conditions in
\cref{lem:dim2-collision} holds, we are done by that lemma. Otherwise,
\cref{lem:no-collision-case} shows that $A=\{-u,v\}$ for some $u,v>0$, and our
assumption implies $u\ne v$. Then \cref{prop:binary-case} gives
\[
F_2(A)>\frac{\pi}{4}=\rho_{\mathrm{sph}}(2,4)=\rho_{\mathrm{sph}}(2,q^2).
\]
This proves the strict case.
\end{proof}

\begin{remark}[Exact-bit consequence in dimension $2$]
\label{cor:dim2-exact-bits}
Let $b\ge 1$ and let $A\subset \R$ be finite with $|A|=2^b$.
\begin{enumerate}[label=(\roman*),nosep]
    \item If $b\ge 2$, then
    \[
    \rho_{\mathrm{sph}}\bigl(2,2^{2b}\bigr)<F_2(A).
    \]
    \item If $b=1$, then equality can occur only for the antipodal binary
    alphabets $A=\{-a,a\}$.
\end{enumerate}
\end{remark}

The two-dimensional classification shows that, except for $A =\{-a, a\},\; \log_2|A| = 1$ (antipodal binary case), product codes are strictly suboptimal relative to spherical codes under the covering objective. Therefore, as bit-width increases beyond 1 bit per dimension, a spherical code is always superior. This establishes a clear separation between structured and unconstrained constructions at minimal dimensionality, and serves as a concrete illustration of the geometric limitations that persist and become more pronounced in higher dimensions.

\section{A harmonic witness lower bound}
\label{sec:harmonic_lb}
In this section, an explicit construction of a direction using harmonic numbers is presented, based on~\cite{Devore1998}. This direction is hard to cover in product code direction sets, resulting in an analytical lower bound for the product code covering objective. More specifically, we demonstrate that this direction, referred to as the \textit{harmonic witness} in the rest of this article, induces a lower bound on worst-case angular error.

\begin{definition}[Harmonic witness]
\label{def:harmonic_witness}
For $n\ge 1$, define the harmonic number
\[
H_n:=\sum_{i=1}^n \frac1i,
\]
and the unit vector
\[
u^{(n)}:=\left(\frac1{\sqrt{H_n}},\frac1{\sqrt{2H_n}},\dots,
\frac1{\sqrt{nH_n}}\right)\in \Sph^{n-1}.
\]
\end{definition}

\begin{definition}[Sign counts]
\label{def:sign-counts}
For a finite alphabet $A\subset \R$, define
\begin{align*}
p_+(A)
&:= |A\cap (0,\infty)|,\\
p_-(A)
&:= |A\cap (-\infty,0)|,\\
m(A)
&:= \min\{p_+(A),p_-(A)\}.
\end{align*}
\end{definition}

\begin{theorem}[Sign-count bound]
\label{cor:sign-symmetric}
For every finite alphabet $A\subset \R$ and every $n\ge 2$,
\[
F_n(A)
\ge
\arccos\!\left(\min\left\{1,2\sqrt{\frac{m(A)}{H_n}}\right\}\right),
\]
where $m(A)=\min\{p_+(A),p_-(A)\}$. In particular, if $A\setminus\{0\}$ has a
single sign, then
\[
F_n(A)\ge \frac{\pi}{2}.
\]
\end{theorem}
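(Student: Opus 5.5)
The plan is to exhibit a single direction $u\in\Sph^{n-1}$ whose cosine with every $c\in\Pcode_n(A)$ is at most $2\sqrt{m(A)/H_n}$. Since $F_n(A)\ge\min_{c}\angles{u}{c}$ for any fixed $u$, and $\arccos$ is decreasing, this gives the bound. If $2\sqrt{m(A)/H_n}\ge1$ the claim reads $F_n(A)\ge 0$, which is trivial, so only the case where the cap is inactive needs work.

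\textbf{Choice of witness.} I would take $u=u^{(n)}$ from \cref{def:harmonic_witness} if $p_+(A)\le p_-(A)$, and $u=-u^{(n)}$ otherwise. By the symmetry $A\mapsto -A$, which swaps $p_+$ and $p_-$ and maps $\Pcode_n(A)$ to $-\Pcode_n(A)$, it suffices to treat $p_+(A)=m(A)=:k$ with $u=u^{(n)}$. Note that all entries of $u^{(n)}$ are positive and strictly decreasing.

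\textbf{Reduction to the positive part.} Fix $x\in A^n\setminus\{0\}$ and write $x=x^+-x^-$ with $x^\pm\ge0$ having disjoint supports. Positivity of $u$ gives
\[
\langle u,x\rangle\le\langle u,x^+\rangle,
\qquad
\|x^+\|_2\le\|x\|_2 .
\]
Hence it suffices to show
\[
\langle u,y\rangle\le 2\sqrt{k/H_n}\,\|y\|_2
\]
for every $y\ge0$ whose nonzero entries take at most $k$ distinct values. If $k=0$, then $x^+=0$ and $\langle u,x\rangle\le0$. This yields $F_n(A)\ge\pi/2$, which is the single-sign statement.

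\textbf{Key estimate.} Let the distinct positive values of $y$ be $v_1,\dots,v_k$ with multiplicities $n_1,\dots,n_k$. The contribution of value $v_j$ to $\langle u,y\rangle$ is
\[
\frac{v_j}{\sqrt{H_n}}\sum_{i\in S_j}\frac{1}{\sqrt{i}},
\]
where $S_j$ is a set of $n_j$ indices. Since $1/\sqrt{i}$ is decreasing, the sum over any $n_j$ distinct indices is at most
\[
\sum_{i=1}^{n_j}\frac{1}{\sqrt i}\le 2\sqrt{n_j},
\]
by comparing the sum with $\int_0^{n_j}t^{-1/2}\,dt$. Therefore
\[
\langle u,y\rangle\le\frac{2}{\sqrt{H_n}}\sum_{j=1}^k v_j\sqrt{n_j}.
\]
Cauchy--Schwarz over the $k$ terms gives
\[
\sum_{j=1}^k v_j\sqrt{n_j}\le\sqrt{k}\Bigl(\sum_{j=1}^k n_jv_j^2\Bigr)^{1/2}=\sqrt{k}\,\|y\|_2 ,
\]
which is the required inequality.

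\textbf{Expected obstacle.} The main point needing care is that this bound must hold for arbitrary placements of the levels, not only sorted ones. The per-level bound handles this directly, because it only uses that any $n_j$ indices give a sum no larger than the first $n_j$ indices. The remaining steps are routine bookkeeping:
\begin{itemize}
\item zero entries contribute nothing;
\item the sign symmetry justifies working with $u$ or $-u$;
\item the cap at $1$ covers the case where the bound exceeds $1$.
\end{itemize}
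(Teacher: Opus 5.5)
Your proof is correct and takes essentially the same route as the paper. Both use the witness $\pm u^{(n)}$ oriented toward the sign class with fewer alphabet values, discard the wrong-sign coordinates, bound each level's index sum by the first-$n_j$ partial sum $\le 2\sqrt{n_j/H_n}$, and apply Cauchy--Schwarz over the at most $m(A)$ levels. The one difference is that you skip the paper's rearrangement step, where the paper sorts $y$ before invoking \cref{lem:step-vector}. That step is indeed unnecessary: as you observe, the per-level bound already holds for an arbitrary set of $n_j$ distinct indices, which is the same fact the paper's lemma relies on.
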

\begin{proof}
\renewcommand{\qedsymbol}{}
    See Appendix \ref{app:harmonic_witness}.
\end{proof}

\begin{corollary}[Uniform $q$-element consequence]
\label{cor:q-uniform-harmonic}
For every $q\ge 2$, every alphabet $A\in \mathcal A_q$, and every $n\ge 2$,
\[
F_n(A)
\ge
\arccos\!\left(\min\left\{1,2\sqrt{\frac{\lfloor q/2\rfloor}{H_n}}\right\}\right).
\]
\end{corollary}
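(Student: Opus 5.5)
This corollary follows directly from the Sign-count bound (\cref{cor:sign-symmetric}) by a monotonicity argument. The plan is to show that $m(A)\le\lfloor q/2\rfloor$ for every $A\in\mathcal A_q$, and then to check that the lower bound of \cref{cor:sign-symmetric} decreases as $m(A)$ increases. Replacing $m(A)$ by the larger quantity $\lfloor q/2\rfloor$ therefore gives a weaker lower bound that holds uniformly over all of $\mathcal A_q$.

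\textbf{Step 1 (counting).} The sets $A\cap(0,\infty)$, $A\cap(-\infty,0)$ and $A\cap\{0\}$ partition $A$, so
\[
p_+(A)+p_-(A)\le |A|=q .
\]
Hence $2m(A)=2\min\{p_+(A),p_-(A)\}\le p_+(A)+p_-(A)\le q$, which gives $m(A)\le q/2$. Since $m(A)$ is an integer, $m(A)\le\lfloor q/2\rfloor$.

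\textbf{Step 2 (monotonicity).} Consider
\[
t\mapsto \arccos\!\left(\min\left\{1,2\sqrt{t/H_n}\right\}\right), \qquad t\ge 0.
\]
The inner map $t\mapsto 2\sqrt{t/H_n}$ is nondecreasing, because $H_n>0$. Taking the minimum with $1$ preserves this and keeps the value in $[0,1]$, which is the domain of $\arccos$. Finally, $\arccos$ is decreasing on $[-1,1]$. So the whole composition is nonincreasing in $t$.

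\textbf{Step 3 (combine).} Take any $A\in\mathcal A_q$ and any $n\ge 2$. By \cref{cor:sign-symmetric},
\[
F_n(A)\ge \arccos\!\left(\min\left\{1,2\sqrt{m(A)/H_n}\right\}\right).
\]
By Steps 1 and 2, the right-hand side is at least
\[
\arccos\!\left(\min\left\{1,2\sqrt{\lfloor q/2\rfloor/H_n}\right\}\right),
\]
which is the claimed bound.

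There is no real obstacle here, since all the geometric content is carried by \cref{cor:sign-symmetric}. The only points needing care are that $0\in A$ does not count toward either sign, which only strengthens Step 1, and that the $\min\{1,\cdot\}$ keeps the argument of $\arccos$ inside its domain.
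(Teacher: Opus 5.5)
Your proof is correct and is essentially the paper's argument. The paper states this corollary without a separate proof, as an immediate consequence of \cref{cor:sign-symmetric}, via $m(A)\le\lfloor q/2\rfloor$ (from $p_+(A)+p_-(A)\le q$) and the fact that the bound is nonincreasing in $m(A)$, which is exactly what you spell out.
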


The argument in this section has isolated a structural limitation of product codes: even under optimal choice of alphabet, there exist directions that cannot be well-aligned with any product-structured vector for large enough dimension to make the bound nontrivial. This bound forms the basis for the asymptotic separation established in the following section, where it is shown that spherical codes can achieve strictly better angular coverage in high dimensions.

\section{Asymptotic separation between product codes and spherical codes}
\label{sec:asym-sphere}
Using the lower bound proven in Section~\ref{sec:harmonic_lb}, this section shows that spherical codes are strictly better than product codes under our covering objective for any fixed alphabet size. 

\begin{definition}[Spherical cap covering number]
For $n\ge 2$ and $\theta\in (0,\pi)$, let $M_c(n,\theta)$ denote the minimum
number of spherical caps of angular radius $\theta$ required to cover
$\Sph^{n-1}$.
\end{definition}

We will make use of the following classical result\footnote{We have also formalized this result in Lean so our overall Lean proof is end-to-end and does not depend on this result as an axiom.}.

\begin{theorem}[Wyner {\cite{wyner-sphere-packing}}]
\label{thm:wyner}
Fix $\theta\in (0,\pi/2)$. Then
\[
M_c(n,\theta)=\exp\bigl(-n\log(\sin\theta)+o(n)\bigr)
\qquad (n\to\infty).
\]
\end{theorem}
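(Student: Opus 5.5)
We prove the two inequalities $M_c(n,\theta)\ge (\sin\theta)^{-n}e^{o(n)}$ and $M_c(n,\theta)\le (\sin\theta)^{-n}e^{o(n)}$ separately. Both are measure/volume estimates for spherical caps. Write $\sigma$ for the normalized surface measure on $\Sph^{n-1}$ and $\Omega_n(\theta)$ for the normalized measure of a cap of angular radius $\theta$.

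\textbf{Step 1 (cap volume asymptotics).} In the standard integral formula
\[
\Omega_n(\theta)=\frac{\int_0^\theta \sin^{n-2}t\,dt}{\int_0^\pi \sin^{n-2}t\,dt},
\]
the denominator is $\Theta(n^{-1/2})$. For $\theta<\pi/2$ the integrand in the numerator is increasing on $[0,\theta]$, so the numerator is at most $\theta\sin^{n-2}\theta$. For a matching lower bound, restrict the integral to $[\theta-1/n,\theta]$, where $\sin^{n-2}t\ge c\,\sin^{n-2}\theta$ for some constant $c>0$ depending on $\theta$. Together these give
\[
\Omega_n(\theta)=(\sin\theta)^{n}e^{o(n)}.
\]

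\textbf{Step 2 (lower bound).} Any covering by $M$ caps satisfies $M\,\Omega_n(\theta)\ge 1$. By Step 1 this yields $M_c(n,\theta)\ge(\sin\theta)^{-n}e^{o(n)}$.

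\textbf{Step 3 (upper bound).} Here we need a covering whose density is only subexponentially wasteful. The cleanest route is Rogers' bound or the Böröczky--Wintsche theorem: for $\theta<\pi/2$ the sphere can be covered by at most $C\,n^{3/2}\log n\,/\,\Omega_n(\theta)$ caps of radius $\theta$.

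If we want a self-contained argument instead, we use random covering plus fattening. Pick $\theta'=\theta-\epsilon_n$ with $\epsilon_n=1/n$. Choose $N=\lceil \Omega_n(\epsilon_n/2)^{-1}\cdot n\log(\cdot)\rceil$-type many random centers so that a maximal $\epsilon_n/2$-net is covered by caps of radius $\theta'$ with high probability. Alternatively, take $N\approx n\log(1/\Omega_n(\epsilon_n))/\Omega_n(\theta')$ random points; a union bound over an $\epsilon_n$-net, whose size is at most $(3/\epsilon_n)^n=e^{O(n\log n)}$, shows that with positive probability every net point lies within $\theta'$ of some center. The triangle inequality for angles then covers the whole sphere at radius $\theta$.

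The count is $N=e^{O(\log n)}/\Omega_n(\theta-1/n)$. By Step 1 applied uniformly, $\sin^n(\theta-1/n)=\sin^n\theta\cdot e^{O(1)}$, so $N=(\sin\theta)^{-n}e^{o(n)}$.

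\textbf{Main obstacle.} The delicate step is Step 3. The union bound over the net costs a factor $\log|\text{net}|=O(n\log n)$, which is only polynomial in $n$, so it is harmless. The real care is in making the Step 1 estimates uniform in the shrinking radius $\theta-1/n$. Combining Steps 2 and 3 gives $\log M_c(n,\theta)=-n\log\sin\theta+o(n)$.
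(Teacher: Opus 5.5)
\textbf{Comparison with the paper.} Your argument is correct, but it takes a genuinely different route, because the paper gives no proof of this statement at all. It treats Wyner's theorem as a cited black box, with a footnote saying it was formalized separately in Lean. You instead give a self-contained proof:
\begin{itemize}
\item the lower bound is the trivial volume bound $M\,\Omega_n(\theta)\ge 1$;
\item the upper bound uses a random covering at the slightly smaller radius $\theta-1/n$, a union bound over an angular $1/n$-net of size $e^{O(n\log n)}$, and the triangle inequality for angles;
\item the cap estimate $\Omega_n(\theta)=(\sin\theta)^n e^{O(\log n)}$, made uniform for radii in $[\theta/2,\theta]$ (where $\cot$ is bounded), then gives $M_c(n,\theta)\le C_\theta\, n^{3/2}\log n\,(\sin\theta)^{-n}$.
\end{itemize}
Your route buys explicit polynomial slack and no dependence on external covering theorems such as Rogers or B\"or\"oczky--Wintsche. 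The citation buys brevity. Note also that the paper only ever uses the upper-bound half, in the proof of \cref{cor:wyner-upper}. So your Step~3 alone is what the downstream separation argument needs; Step~2 is correct but unused there.

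\textbf{A slip in Step~3.} Your first sizing, $N=\lceil \Omega_n(\epsilon_n/2)^{-1}\cdot n\log(\cdot)\rceil$, is wrong as written. The factor $\Omega_n(\epsilon_n/2)^{-1}$ is essentially the net size, $e^{\Theta(n\log n)}$, so it would make $N$ superexponential and destroy the bound. The union bound only needs $|\mathrm{net}|\,e^{-N\Omega_n(\theta')}<1$, that is, $N\approx \log|\mathrm{net}|/\Omega_n(\theta') = O(n\log n)/\Omega_n(\theta')$. This is your ``alternatively'' version, up to a harmless extra factor of $n$. Drop the first sizing and keep only the second.
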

\begin{proof}
    See~\cite{wyner-sphere-packing}.
\end{proof}

\begin{corollary}[Exponential spherical upper bound]
\label{cor:wyner-upper}
Let $\theta\in (0,\pi/2)$ and let $\lambda>1$. If
\[
\lambda\sin\theta>1,
\]
then there exists $N$ such that for all $n\ge N$,
\[
\rho_{\mathrm{sph}}\bigl(n,\lfloor \lambda^n\rfloor\bigr)\le \theta.
\]
\end{corollary}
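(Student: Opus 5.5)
The plan is to derive the corollary directly from \cref{thm:wyner} by comparing exponential growth rates, and then to pass from a cap covering to a code. Since $\theta\in(0,\pi/2)$, we have $\sin\theta\in(0,1)$, and Wyner's theorem gives
\[
M_c(n,\theta)=\exp\bigl(n\log(1/\sin\theta)+o(n)\bigr).
\]
The hypothesis $\lambda\sin\theta>1$ is equivalent to $\log\lambda>\log(1/\sin\theta)$. Set
\[
\varepsilon:=\log\lambda-\log(1/\sin\theta)>0.
\]
By the definition of $o(n)$, there is $N_1$ such that for $n\ge N_1$ the $o(n)$ term is at most $\varepsilon n/2$. Then
\[
M_c(n,\theta)\le \exp\bigl(n(\log\lambda-\varepsilon/2)\bigr)=\lambda^n e^{-\varepsilon n/2}.
\]

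Next, the floor must be absorbed. Since $\lambda^n e^{-\varepsilon n/2}\le \lambda^n-1$ once $\lambda^n(1-e^{-\varepsilon n/2})\ge 1$, and the left side tends to infinity because $\lambda>1$, there is $N_2$ such that for $n\ge N_2$
\[
M_c(n,\theta)\le \lambda^n-1<\lfloor\lambda^n\rfloor .
\]
Here $M_c$ is an integer, so we in fact get $M_c(n,\theta)\le\lfloor\lambda^n\rfloor$. Take $N:=\max\{N_1,N_2\}$.

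Finally, we convert the covering into a bound on $\rho_{\mathrm{sph}}$. Let $m_0=M_c(n,\theta)$, and let $C_0$ be the centers of $m_0$ caps of angular radius $\theta$ covering $\Sph^{n-1}$. Each $u\in\Sph^{n-1}$ lies in some cap, so $\min_{c\in C_0}\angles{u}{c}\le\theta$, and hence $\covrad(C_0)\le\theta$. The definition of $\rho_{\mathrm{sph}}(n,m)$ requires exactly $m=\lfloor\lambda^n\rfloor$ points. We therefore pad $C_0$ with $m-m_0$ arbitrary distinct further points of the sphere, which is possible since the sphere is infinite for $n\ge 2$. Adding points can only decrease the inner minimum, so the padded code $C$ satisfies $\covrad(C)\le\theta$. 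Thus $\rho_{\mathrm{sph}}(n,\lfloor\lambda^n\rfloor)\le\theta$.

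The step needing most care is this monotonicity and padding step, together with the convention that caps are closed. If Wyner's $M_c$ were defined with open caps, the covering still gives a bound with $\le\theta$, so nothing changes. The asymptotic comparison itself is routine once the $o(n)$ term is handled as above.
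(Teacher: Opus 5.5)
Your proposal is correct and follows essentially the same route as the paper: choose $\varepsilon$ below the gap $\log\lambda+\log\sin\theta$, use \cref{thm:wyner} to get $M_c(n,\theta)<\lambda^n$ for large $n$, conclude $M_c(n,\theta)\le\lfloor\lambda^n\rfloor$ by integrality, and take the cap centers as the code. Your detour through $\lambda^n-1$ is unnecessary, since integrality alone handles the floor; your explicit padding to exactly $\lfloor\lambda^n\rfloor$ points spells out a step the paper leaves implicit (essentially the argument of \cref{lem:rho-antitone}).
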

\begin{proof}
\renewcommand{\qedsymbol}{}
See Appendix \ref{app:asymptotic_separation}.
\end{proof}

The total number of possible symbols stored by a product code in $n$ dimensions using an alphabet of $q$ elements is $q^n$. So the natural comparison for spherical code coverage is a spherical
code with the same number of symbols. This section has thus far proven that this spherical code has asymptotically improving coverage. We may now combine that result with the bound on possible coverage for the corresponding product code derived in the previous section.

\begin{theorem}[Asymptotic strict separation for fixed alphabet size]
\label{thm:asymptotic-strict}
Fix $q\ge 2$. Then there exists $N(q)$ such that for all $n\ge N(q)$ and all
alphabets $A\in \mathcal A_q$,
\[
\rho_{\mathrm{sph}}(n,q^n)<F_n(A).
\]
Hence also
\[
\rho_{\mathrm{sph}}(n,q^n)<w_{n,q}.
\]
\end{theorem}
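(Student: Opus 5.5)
The plan is to squeeze a fixed angle strictly between the two quantities. \cref{cor:q-uniform-harmonic} bounds $F_n(A)$ from below uniformly in $A\in\mathcal A_q$. \cref{cor:wyner-upper} with $\lambda=q$ bounds $\rho_{\mathrm{sph}}(n,q^n)$ from above. The lower bound approaches $\pi/2$ as $n\to\infty$, because $H_n\to\infty$ while $\lfloor q/2\rfloor$ stays fixed. The upper bound can be made any fixed $\theta<\pi/2$ with $q\sin\theta>1$. So I fix one such $\theta$ and show the harmonic bound eventually exceeds it.

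\textbf{Choice of $\theta$.} Since $q\ge 2$, we have $1/q\le 1/2<1$, so I can choose $\theta\in(0,\pi/2)$ with $\sin\theta>1/q$. A concrete choice is $\theta:=\pi/3$, since $\sin(\pi/3)=\sqrt3/2>1/2\ge 1/q$. Then $\lambda:=q>1$ satisfies $\lambda\sin\theta>1$. Because $q^n$ is an integer, $\lfloor\lambda^n\rfloor=q^n$. \cref{cor:wyner-upper} then gives $N_1$ such that $\rho_{\mathrm{sph}}(n,q^n)\le\theta$ for all $n\ge N_1$.

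\textbf{Lower bound on $F_n(A)$.} Since $H_n\to\infty$ (divergence of the harmonic series), there is $N_2$ such that for $n\ge N_2$,
\[
2\sqrt{\lfloor q/2\rfloor/H_n}<\cos\theta=\tfrac12 .
\]
This requires $H_n>16\lfloor q/2\rfloor$, which holds, for example, once $\log n\ge 16\lfloor q/2\rfloor$, using $H_n\ge\log(n+1)$. For such $n$ the minimum in \cref{cor:q-uniform-harmonic} equals $2\sqrt{\lfloor q/2\rfloor/H_n}$. Since $\arccos$ is strictly decreasing on $[-1,1]$,
\[
F_n(A)\ge\arccos\bigl(2\sqrt{\lfloor q/2\rfloor/H_n}\bigr)>\arccos(\cos\theta)=\theta,
\]
and this holds for every $A\in\mathcal A_q$. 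Setting $N(q):=\max\{N_1,N_2,2\}$ gives, for all $n\ge N(q)$ and all $A\in\mathcal A_q$,
\[
\rho_{\mathrm{sph}}(n,q^n)\le\theta<F_n(A).
\]

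\textbf{The infimum statement.} Strictness for each $A$ does not automatically pass to the infimum $w_{n,q}$. It does here because the margin is uniform in $A$: for every $A$,
\[
F_n(A)\ge\arccos\bigl(2\sqrt{\lfloor q/2\rfloor/H_n}\bigr)=:\beta_n>\theta .
\]
Hence $w_{n,q}\ge\beta_n>\theta\ge\rho_{\mathrm{sph}}(n,q^n)$.

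The only step needing care is this last one, and it is handled by keeping the explicit $A$-independent bound $\beta_n$ rather than only the strict pointwise inequality. Everything else is bookkeeping once \cref{cor:q-uniform-harmonic} and \cref{cor:wyner-upper} are taken as given.
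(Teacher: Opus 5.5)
Your proof is correct and follows the paper's argument essentially verbatim: you fix an angle $\theta<\pi/2$ with $q\sin\theta>1$ and squeeze it between the Wyner-based upper bound from \cref{cor:wyner-upper} (with $\lambda=q$, so $\lfloor\lambda^n\rfloor=q^n$) and the $A$-independent harmonic lower bound from \cref{cor:q-uniform-harmonic}. Your concrete choice $\theta=\pi/3$ and your explicit use of the $A$-independent quantity $\beta_n>\theta$ to carry strictness over to $w_{n,q}$ spell out what the paper compresses into the phrase ``the inequality is uniform over $A$.''
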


\begin{proof}
Choose $\theta$ such that
\[
\arcsin\!\left(\frac1q\right)<\theta<\frac{\pi}{2}.
\]
Then $q\sin\theta>1$. By \cref{cor:wyner-upper}, there exists $N_1(q)$ such
that for all $n\ge N_1(q)$,
\[
\rho_{\mathrm{sph}}(n,q^n)\le \theta.
\]

On the product side, \cref{cor:q-uniform-harmonic} shows that for every
$A\in \mathcal A_q$,
\[
F_n(A)
\ge
\arccos\!\left(\min\left\{1,2\sqrt{\frac{\lfloor q/2\rfloor}{H_n}}\right\}\right).
\]
Because $H_n\to\infty$, the right-hand side tends to $\pi/2$. Since
$\theta<\pi/2$, there exists $N_2(q)$ such that for all $n\ge N_2(q)$,
\[
\arccos\!\left(\min\left\{1,2\sqrt{\frac{\lfloor q/2\rfloor}{H_n}}\right\}\right)>\theta.
\]
Therefore, for all $n\ge N_2(q)$ and all $A\in \mathcal A_q$,
\[
F_n(A)>\theta.
\]

Hence for all $n\ge N(q):=\max\{N_1(q),N_2(q)\}$ and all $A\in \mathcal A_q$,
\[
\rho_{\mathrm{sph}}(n,q^n)\le \theta < F_n(A),
\]
which is the desired strict separation. Since the inequality is uniform over
$A\in \mathcal A_q$, taking the infimum over all such alphabets yields the
statement for $w_{n,q}$.
\end{proof}

In particular, for the same storage budget in bits, $b$, the spherical code outperforms the product code:

\begin{remark}[Exact-bit consequence]
\label{cor:asymptotic-exact-bits}
Fix $b\ge 1$ and set $q:=2^b$. Then there exists $N(b)$ such that for all
$n\ge N(b)$ and all finite alphabets $A\subset \R$ with $|A|=2^b$,
\[
\rho_{\mathrm{sph}}\bigl(n,2^{bn}\bigr)<F_n(A).
\]
Hence also
\[
\rho_{\mathrm{sph}}\bigl(n,2^{bn}\bigr)<w_{n,2^b}.
\]
\end{remark}
~\\

Note that the dimension-$2$ and asymptotic results have different flavors. In dimension $2$, strict separation fails exactly for the antipodal binary alphabets $A=\{-a,a\}$. In contrast, for every fixed alphabet size $q\ge 2$, asymptotic strict separation holds. 

Thus we have shown strict suboptimality of product codes in our covering objective, relative to spherical codes (for large enough dimension), and quantified the gap as the difference between the product code lower bound (\cref{cor:q-uniform-harmonic}) and the spherical code upper bound (\cref{cor:wyner-upper}).

\section{Separation among product codes}
\label{sec:asymptotic-sep-prod-code}

This section analyzes the properties of product codes constructed by the specific commonly used alphabets of floating-point, fixed-point, and two's-complement. We prove the suboptimality of these formats and quantify the optimality gap versus a product code using an optimal alphabet. These three formats are referred to as the \textit{standard formats} henceforth. 

Importantly for all product codes, the covering objective tends to at least $\pi/2$ as $n\to\infty$, which is stated formally and proven in \cref{prop:prod-code-limit}. Therefore, unnormalized angular separation is not informative in the asymptote. However, we show in this section that a normalized version of coverage, which we refer to as the normalized orthogonality deficit, is informative and allows us to quantify the coverage cost of using standard number formats. Specifically, we study the quantity below, where the harmonic numbers are as in \cref{def:harmonic_witness}.

For convenience, in this section we will work with a complementary form of the covering objective:
\[
\alpha_n(A):=\inf_{u\in\Sph^{n-1}}
\max_{x\in A^n\setminus\{0\}}\frac{\langle u,x\rangle}{\|x\|_2},
\]
so
\[
F_n(A)=\arccos \alpha_n(A),
\qquad
\alpha_n(A)=\cos F_n(A).
\]
Since $\cos$ is decreasing on $[0,\pi]$, we also have
\[
\cos w_{n,q}
=
\sup\{\alpha_n(A):A\subset\R,\ |A|=q\}.
\]
\begin{definition}[Canonical floating-point family]
\label{def:canonical-float}
Fix a bit-width $b\ge2$. We use one sign bit, $e\ge1$ exponent bits, and
$t\ge0$ trailing mantissa bits ({\em i.e.} not counting the `hidden 1'), with
\[
b=1+e+t.
\]
With the smallest exponent denoting denormals but no special values (Inf, NaN), up to global scaling, define the positive decoded levels by
\begin{align*}
\Phi_{e,t}^+
&:=
\{1,2,\dots,2^t-1\}\\
&\quad \cup
\left\{
\begin{aligned}
(2^t+m)2^j:\;& 0\le m\le 2^t-1,\\
& 0\le j\le 2^e-2
\end{aligned}
\right\}.
\end{align*}
When $t=0$, the first set (denormals) is empty. The full decoded real alphabet is
\[
\Phi_{e,t}:=\{0\}\cup\Phi_{e,t}^+\cup(-\Phi_{e,t}^+).
\]

The product code covering objective for the floating-point alphabet family is denoted by $F^{fp}_{n,b}$.
\end{definition}

\begin{remark}[Fixed-point alphabets and two's complement]
\label{rem:fixed-point-twos-complement}
    The definition of the canonical floating-point family includes 0-symmetric fixed-point alphabets via the case $e=1$ (the denormals and the first and only binade of normal numbers). Furthermore, \cref{lem:one_unmatched_scalar} proves that two's complement alphabets offer no better angular coverage than symmetric fixed-point. Therefore, to prove suboptimality of the standard formats, it suffices to prove suboptimality of the floating-point alphabet family.
\end{remark}

\begin{lemma}[Fixed sign-symmetric alphabets]
\label{prop:fixed-sign-symmetric}
Let
\[
A=\{0\}\cup\pm\{c_1,\dots,c_m\},
\;\;\;
c_1>c_2>\cdots>c_m>0.
\]
Then, for every $n$,
\[
\sqrt{H_n}\,\alpha_n(A)
\le
2\sqrt{
1+\sum_{j=1}^{m-1}\frac{c_j-c_{j+1}}{c_j+c_{j+1}}
}.
\]
\end{lemma}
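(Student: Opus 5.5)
We bound $\alpha_n(A)$ by evaluating the infimum in its definition at a single direction, the harmonic witness $u^{(n)}$ of \cref{def:harmonic_witness}:
\[
\alpha_n(A)\le \max_{x\in A^n\setminus\{0\}}\frac{\langle u^{(n)},x\rangle}{\|x\|_2}.
\]
So it suffices to show that for every nonzero $x\in A^n$,
\[
\sqrt{H_n}\,\langle u^{(n)},x\rangle\le 2\|x\|_2\sqrt{S},
\qquad
S:=1+\sum_{j=1}^{m-1}\frac{c_j-c_{j+1}}{c_j+c_{j+1}}.
\]
If $\langle u^{(n)},x\rangle\le 0$ the inequality is trivial, so we only need to handle the case where it is positive.

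\textbf{Reductions.} All coordinates of $u^{(n)}$ are positive. Replacing every negative entry of $x$ by $0$ can only increase the numerator and can only decrease $\|x\|_2$. Hence we may assume $x\in\{0,c_1,\dots,c_m\}^n$ with $x\neq 0$. Permuting the entries of $x$ leaves $\|x\|_2$ unchanged. Since the weights $1/\sqrt{iH_n}$ are decreasing in $i$, the rearrangement inequality shows the numerator is maximised when $x$ is nonincreasing. So $x$ equals $c_1$ on $\{1,\dots,k_1\}$, $c_2$ on $\{k_1+1,\dots,k_2\}$, and so on, and is $0$ after index $k_m$, where $0\le k_1\le\cdots\le k_m\le n$ and $k_m\ge 1$.

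\textbf{Layer-cake decomposition.} Set $c_{m+1}:=0$. Then
\[
x=\sum_{j=1}^m (c_j-c_{j+1})\,\mathbf 1_{[1,k_j]},
\qquad
\|x\|_2^2=\sum_{j=1}^m (c_j^2-c_{j+1}^2)\,k_j .
\]
The elementary estimate $\sum_{i\le k} i^{-1/2}\le 2\sqrt{k}$ gives
\[
\sqrt{H_n}\,\langle u^{(n)},x\rangle\le 2\sum_{j=1}^m (c_j-c_{j+1})\sqrt{k_j}.
\]

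\textbf{Cauchy--Schwarz.} Write each summand as
\[
(c_j-c_{j+1})\sqrt{k_j}=\sqrt{(c_j^2-c_{j+1}^2)k_j}\cdot\frac{c_j-c_{j+1}}{\sqrt{c_j^2-c_{j+1}^2}} .
\]
Applying Cauchy--Schwarz to the sum gives
\[
\sum_{j=1}^m (c_j-c_{j+1})\sqrt{k_j}\le \|x\|_2\left(\sum_{j=1}^m\frac{c_j-c_{j+1}}{c_j+c_{j+1}}\right)^{1/2}.
\]
The term $j=m$ equals $c_m/c_m=1$, so the bracketed sum is exactly $S$. Dividing by $\|x\|_2$ and combining with the previous display yields the claimed bound.

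\textbf{Main obstacle.} The key step is choosing the right weights in Cauchy--Schwarz. A naive bound of $\langle u^{(n)},x\rangle$ by $\|x\|_2\,\|u^{(n)}\|_2$ only recovers the trivial bound $\alpha_n\le 1$. The layer-cake form is what makes the norm $\|x\|_2^2$ appear as the weighted sum of the $k_j$. I would also check the sign reduction carefully: zeroing negative entries keeps $x\neq 0$ whenever the inner product is positive, and the nonpositive case was already dispatched above.
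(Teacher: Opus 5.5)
Your proof is correct and complete. It shares the paper's skeleton: test against the single direction $u^{(n)}$, reduce to a nonnegative nonincreasing $x$, use $\sum_{i\le k} i^{-1/2}\le 2\sqrt{k}$, and finish with Cauchy--Schwarz. Where you differ is the coordinates used in the last step.

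The paper parametrizes by the increments $d_j=\sqrt{k_j}-\sqrt{k_{j-1}}$. In those coordinates $\|x\|_2^2=d^\top Qd$ with the non-diagonal matrix $Q_{ij}=c_{\max\{i,j\}}^2$. It applies Cauchy--Schwarz in the $Q$-inner product to get $2\sqrt{c^\top Q^{-1}c}$. It then needs the separate quadratic-form lemma (\cref{lem:quadratic-identity}) to evaluate that constant, which involves positive definiteness, solving $Qy=c$, and summation by parts.

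You work instead with the cumulative quantities $\sqrt{k_j}$ through the layer-cake decomposition. There both the linear bound $\sum_j(c_j-c_{j+1})\sqrt{k_j}$ and the norm $\sum_j(c_j^2-c_{j+1}^2)k_j$ are diagonal. A weighted Cauchy--Schwarz then yields the constant term by term, $\frac{(c_j-c_{j+1})^2}{c_j^2-c_{j+1}^2}=\frac{c_j-c_{j+1}}{c_j+c_{j+1}}$, with no matrix inversion.

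The two routes prove literally the same inequality. The substitution $t_j=d_1+\cdots+d_j$ that the paper uses to show $Q$ is positive definite is exactly your diagonalization. Your version buys three things:
\begin{itemize}
\item It is shorter and self-contained.
\item It makes plain that the leading $1$ is the $j=m$ layer (with $c_{m+1}=0$).
\item It shows directly the profile $\sqrt{k_j}\propto 1/(c_j+c_{j+1})$ at which Cauchy--Schwarz is tight. The paper encodes this only implicitly, through the partial sums of $Q^{-1}c$.
\end{itemize}
Your explicit treatment of the case $\langle u^{(n)},x\rangle\le 0$, before zeroing the negative entries, is also slightly more careful than the paper's one-line reduction.
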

\begin{proof}
\renewcommand{\qedsymbol}{}
    See Appendix \ref{app:prod-code-sep}.
\end{proof}

\begin{corollary}[Floating-point normalized obstruction]
\label{cor:fp-obstruction}
For every $b\ge2$,
\[
\limsup_{n\to\infty}\sqrt{H_n}\cos F^{\fp}_{n,b}
\le
2\sqrt{\frac{2^{b-1}+1}{3}}.
\]
\end{corollary}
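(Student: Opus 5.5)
The plan is to apply \cref{prop:fixed-sign-symmetric} directly to each canonical floating-point alphabet $\Phi_{e,t}$, and then to bound the resulting telescoping-type sum by counting levels. By \cref{def:canonical-float}, $\Phi_{e,t}=\{0\}\cup\pm\Phi_{e,t}^+$ is exactly of the sign-symmetric form required by the lemma, with $\{c_1,\dots,c_m\}=\Phi_{e,t}^+$ listed in decreasing order. I read $F^{\fp}_{n,b}$ as the best covering objective over the finitely many admissible splits $(e,t)$ with $1+e+t=b$. Then
\[
\cos F^{\fp}_{n,b}=\max_{e,t}\alpha_n(\Phi_{e,t}).
\]
So it suffices to prove, for each fixed $(e,t)$ and every $n$, the bound
\[
\sqrt{H_n}\,\alpha_n(\Phi_{e,t})\le 2\sqrt{\tfrac{2^{b-1}+1}{3}}.
\]
The $\limsup$ (indeed the $\sup$) over $n$ then follows immediately, since a maximum of finitely many quantities each bounded by the same constant is bounded by it.

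\textbf{Counting levels.} The number of positive levels is
\[
m=(2^t-1)+2^t(2^e-1)=2^{e+t}-1=2^{b-1}-1.
\]
I will check that these are all distinct, which holds because binade $j$ occupies $[2^{t+j},2^{t+j+1})$. Hence the sum in the lemma has exactly $m-1=2^{b-1}-2$ terms.

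\textbf{Bounding each term.} For consecutive levels $a<a'$, the term is $(a'-a)/(a'+a)$. I will show every such term has the form $1/(2k+1)$ with $k\ge1$ an integer, so each is at most $1/3$. There are three cases.
\begin{enumerate}[label=(\alph*),nosep]
    \item The denormals together with binade $j=0$ are the consecutive integers $1,\dots,2^{t+1}-1$. Consecutive pairs $k,k+1$ give the term $1/(2k+1)$.
    \item Inside binade $j\ge1$, consecutive levels are $(2^t+m)2^j$ and $(2^t+m+1)2^j$. The factor $2^j$ cancels, giving $1/(2(2^t+m)+1)$.
    \item At a binade boundary, the pair is $(2^{t+1}-1)2^{j-1}$ and $2^{t+1}2^{j-1}$. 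This gives $1/(2^{t+2}-1)$, which is again of the form $1/(2k+1)$ with $k\ge1$.
\end{enumerate}
When $t=0$ the denormal set is empty and the same computation goes through. This case analysis on the ordering of levels is the only real work, and the main thing to get right is that no other consecutive pairs occur.

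\textbf{Conclusion.} Combining the count with the per-term bound,
\[
1+\sum_{j=1}^{m-1}\frac{c_j-c_{j+1}}{c_j+c_{j+1}}\le 1+\frac{2^{b-1}-2}{3}=\frac{2^{b-1}+1}{3}.
\]
Plugging this into \cref{prop:fixed-sign-symmetric} gives the displayed bound for every $n$ and every admissible $(e,t)$. Taking the maximum over $(e,t)$ and then the $\limsup$ in $n$ proves the corollary.
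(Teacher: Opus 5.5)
Your proposal is correct and follows the paper's proof. You apply \cref{prop:fixed-sign-symmetric} to each $\Phi_{e,t}$, bound each of the $m-1=2^{b-1}-2$ summands by $1/3$, and take the maximum over the finitely many splits $(e,t)$. The only cosmetic difference is that you compute each summand exactly as $1/(2k+1)$, whereas the paper runs the same case analysis through the ratio bound $c_j/c_{j+1}\le 2$ of \cref{lem:fp-ratio}.
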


\begin{proof}
Let $m=2^{b-1}-1$. Write the positive levels of $\Phi_{e,t}$ in decreasing
order as $c_1>\cdots>c_m>0$. By \cref{lem:fp-ratio}, $c_j/c_{j+1}\le2$, and so
\[
\frac{c_j-c_{j+1}}{c_j+c_{j+1}}\le\frac13.
\]
\Cref{prop:fixed-sign-symmetric}
therefore gives
\begin{align*}
\sqrt{H_n}\,\alpha_n(\Phi_{e,t})
&\le 2\sqrt{1+\frac{m-1}{3}}\\
&= 2\sqrt{\frac{m+2}{3}}\\
&= 2\sqrt{\frac{2^{b-1}+1}{3}}.
\end{align*}
Since $F^{\fp}_{n,b}$ is the minimum over finitely many floating-point splits,
its cosine is the maximum of the corresponding $\alpha_n(\Phi_{e,t})$ values.
This proves the claim.
\end{proof}

Now that the bound is proven for the standard formats, we move to the complementary bound for arbitrary alphabets. The result is simply stated here, with the proof given in the appendix.

\begin{theorem}[Arbitrary alphabets]
\label{cor:arb-lower}
For every $b\ge2$,
\[
\liminf_{n\to\infty}\sqrt{H_n}\cos w_{n,2^b}
\ge
2\sqrt{2^{b-1}-1}.
\]
\end{theorem}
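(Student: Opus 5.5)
The plan is to give an explicit construction. Since $w_{n,2^b}$ is an infimum over alphabets at each fixed $n$, it suffices to exhibit, for every large $n$, one alphabet $A_n$ with $|A_n|=2^b$ and
\[
\sqrt{H_n}\,\alpha_n(A_n)\ge 2\sqrt{m}\,(1-o(1)),\qquad m:=2^{b-1}-1 .
\]
Then $\cos w_{n,2^b}\ge\alpha_n(A_n)$ gives the $\liminf$.

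\textbf{Choice of alphabet.} I take a sign-symmetric alphabet $A_n=\{0\}\cup\pm\{c_1,\dots,c_m\}$ and pad it with one harmless extra value, since $2m+1=2^b-1$; adding points to $A$ can only increase $\alpha_n$. The levels are geometric with an $n$-dependent ratio, $c_j=n^{-j/(2m)}$. This choice is guided by \cref{prop:fixed-sign-symmetric}: there the bound $2\sqrt{1+\sum_j (c_j-c_{j+1})/(c_j+c_{j+1})}$ tends to $2\sqrt m$ exactly when the ratios $c_j/c_{j+1}\to\infty$. So widely separated levels are the only way to reach the claimed constant, and the ratio must grow with $n$.

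\textbf{One-level case ($m=1$).} Fix a unit vector $u$. By symmetry assume $u_1\ge u_2\ge\dots\ge u_n\ge0$ after permuting and flipping signs. Let $S_k=\sum_{i\le k}u_i$ and $M=\max_k S_k/\sqrt k$, which is the value achieved by the sign patterns on the top $k$ coordinates. Abel summation gives
\[
1=\sum_k u_k^2\le\sum_k S_k(u_k-u_{k+1})\le M\sum_k\sqrt k\,(u_k-u_{k+1})=M\sum_k u_k\bigl(\sqrt k-\sqrt{k-1}\bigr).
\]
I then use $\sqrt k-\sqrt{k-1}\le \tfrac1{2\sqrt{k-1}}$ for $k\ge2$, absorbing the $k=1$ term and the index shift as lower-order. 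Cauchy--Schwarz then gives $\sum_k u_k/\sqrt{k}\le\sqrt{H_n}$, so
\[
M\ge \frac{2}{\sqrt{H_n}}\,(1-o(1)).
\]
This reproduces the factor $2$, which the harmonic witness shows is tight.

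\textbf{General $m$.} I split the sorted indices into $m$ log-bands $B_j=\{k: n^{(j-1)/m}<k\le n^{j/m}\}$, each carrying harmonic mass $H_n/m+O(1)$. The level $c_j$ is matched to band $j$, because $1/\sqrt k$ drops by a factor $n^{1/(2m)}$ across each band. Candidate vectors $x$ are staircases: on band $j$ take $\operatorname{sign}(u_i)c_j$ for the top $k_j$ indices inside the band and $0$ afterwards, or more flexibly let the level used drop at chosen thresholds. For such $x$, the quantity $\langle u,x\rangle/\|x\|_2$ is a combination of partial sums weighted by $c_j$. Repeating the Abel/Cauchy--Schwarz argument band by band, with the harmonic sum restricted to each band, I expect each band to contribute like a one-level problem of harmonic mass $H_n/m$. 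Because the scales are matched to $1/\sqrt k$, these contributions add coherently rather than just taking a maximum. The target is
\[
\alpha_n(A_n)\ge 2\sqrt{\tfrac{H_n}{m}}\cdot\tfrac{m}{H_n}(1-o(1))=\tfrac{2\sqrt m}{\sqrt{H_n}}(1-o(1)).
\]

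\textbf{Main obstacle.} The hard step is this multi-band lower bound, which is a minimax statement: for every $u$ some staircase $x$ must achieve the combined value, not just the best single band. I would first compare $x$ to the ideal profile $k\mapsto 1/\sqrt k$ on which the witness is extremal. Concretely, I would run the Abel summation on the whole sequence with the weight $\sqrt k$ replaced by $\|x^{(k)}\|_2$ for the staircase truncated at $k$; since the staircase norms track $\sqrt{\sum_{i\le k}c_{j(i)}^2}$, this is comparable to a discretized $\sqrt{\log}$-type profile. The mismatch at band boundaries should only cost $O(1)$ harmonic mass per band, hence $o(\sqrt{H_n})$ overall. Failing that, a fallback is to prove a weaker constant with the same approach, then sharpen the band endpoints.
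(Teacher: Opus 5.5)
Your reduction is sound. Since $\cos w_{n,2^b}=\sup_A\alpha_n(A)$, it suffices to exhibit one $2^b$-element alphabet per $n$, and padding $\{0\}\cup\pm\{c_j\}$ with one extra value cannot decrease $\alpha_n$. Your Abel-summation argument for $m=1$ is also correct once the small-$k$ terms are controlled (e.g.\ $\sum_k u_k(\sqrt k-\sqrt{k-1})\le\tfrac12\sqrt{H_n}+O(1)$ by Cauchy--Schwarz). But $m=2^{b-1}-1=1$ only when $b=2$. For every $b\ge3$ the content of the theorem is the multi-level lower bound, and that is the step you leave as an expectation. ``Each band contributes like a one-level problem and the contributions add coherently'' is the claim to be proved. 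It is only plausible for the harmonic witness, whereas $\alpha_n$ is an infimum over \emph{all} $u$. The mechanism you propose also fails as stated. Suppose the level at sorted index $k$ is $c_{j(k)}$, fixed by the band containing $k$. Recovering $\sum_k u_k^2$ from the partial sums $T_k=\sum_{i\le k}c_{j(i)}u_i\le M\|x^{(k)}\|_2$ by summation by parts then needs the weights $u_k/c_{j(k)}$ to be nonincreasing. But $c_{j(k)}$ drops by a factor $n^{1/(2m)}$ at each band boundary, so these weights generally jump upward there. The thresholds where the level drops must depend on $u$, and you need a device that chooses them uniformly in $u$.

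The paper's device is to average over the one-parameter family of nearest-neighbour quantizations $Q(\lambda u)$, rather than over fixed staircases. Completing the square gives, for every $\lambda>0$ and $s=|u|$,
\[
\sum_{i=1}^n\psi_B(\lambda s_i)=\max_{x}\bigl(2\lambda\langle s,x\rangle-\|x\|_2^2\bigr)\le\lambda^2M^2,
\qquad
\psi_B(y)=\max\Bigl(0,\max_{b\in B}(2by-b^2)\Bigr).
\]
Integrating against $d\lambda/\lambda^3$ over $[\lambda_0,\lambda_1]$ makes each coordinate contribute about $s_i^2\int\psi_B(y)\,y^{-3}\,dy\approx 2m\,s_i^2$, however $u$ distributes its mass. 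Hence $M^2\ge(2m-o(1))/\log(\lambda_1/\lambda_0)$. Your $m=1$ argument is already an average of this kind, over the thresholds $k$.

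This bound reaches $2\sqrt m$ only when $\log(\lambda_1/\lambda_0)=(\tfrac12+o(1))\log n$, i.e.\ when the alphabet's dynamic range is $n^{o(1)}$. That is why the paper takes levels $1,R_n,\dots,R_n^{m-1}$ with $R_n\to\infty$ but $\log R_n=o(\log n)$. With your ratio $n^{1/(2m)}$, the range of $\log\lambda$ needed is about $(1-\tfrac1{2m})\log n$, and the same averaging gives only $2m/\sqrt{2m-1}<2\sqrt m$ for $m\ge2$. Your alphabet sits at the edge of what can work. It can be rescued by averaging $\log\lambda$ over a single period of length $\log R$ and separately bounding the mass carried by coordinates of size $O(n^{-1/2})$. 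A subpolynomially growing ratio is what lets the clean argument go through.
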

\begin{proof}
\renewcommand{\qedsymbol}{}
    See Appendix \ref{app:prod-code-sep}.
\end{proof}

Finally, we combine the previous results to prove the suboptimality of the standard formats.

\begin{theorem}[Superiority of arbitrary alphabets]
\label{thm:main}
For every bit-width $b\ge3$,
\begin{align*}
\liminf_{n\to\infty}\sqrt{H_n}\cos w_{n,2^b}
&\ge 2\sqrt{2^{b-1}-1}\\
&> 2\sqrt{\frac{2^{b-1}+1}{3}}\\
&\ge \limsup_{n\to\infty}\sqrt{H_n}\cos F^{\fp}_{n,b}.
\end{align*}
Thus the best arbitrary $2^b$-symbol scalar product alphabets converge to
$\pi/2$ strictly more slowly than the best $b$-bit floating-point alphabets on
the natural $1/\sqrt{H_n}\sim1/\sqrt{\log n}$ scale.
\end{theorem}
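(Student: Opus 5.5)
The statement is a chain of three inequalities, and I will treat each link separately. The first link is exactly \cref{cor:arb-lower}, applied at the given $b\ge3$ (which satisfies $b\ge2$). The third link is exactly \cref{cor:fp-obstruction}, again applied at $b\ge3\ge2$. So the only new content is the middle strict inequality between the two constants, together with the interpretive consequence about the rate of convergence to $\pi/2$.

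For the middle link, set $M:=2^{b-1}$. Both sides are positive, so squaring and dividing by $4$ reduces the claim to
\[
M-1>\frac{M+1}{3},
\]
which is equivalent to $3M-3>M+1$, i.e.\ $M>2$. This holds precisely when $b\ge3$, since then $M\ge4$. I would also note that it fails at $b=2$: there $M=2$ and the two constants coincide (both equal $2$). That explains why the theorem is stated only for $b\ge3$, even though the two ingredients hold for $b\ge2$.

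To obtain the interpretation, I will take the combined inequality
\[
\liminf_{n\to\infty}\sqrt{H_n}\cos w_{n,2^b}>\limsup_{n\to\infty}\sqrt{H_n}\cos F^{\fp}_{n,b}.
\]
It says that, for all sufficiently large $n$, $\cos w_{n,2^b}$ exceeds $\cos F^{\fp}_{n,b}$ by a fixed positive constant multiple of $1/\sqrt{H_n}$. Since $\cos$ is decreasing on $[0,\pi]$, this gives $w_{n,2^b}<F^{\fp}_{n,b}$ eventually. Because $\arccos$ has derivative $-1$ at $0$, both angles approach $\pi/2$ at rate $\Theta(1/\sqrt{H_n})$. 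The gap between them is of that same order, and $H_n\sim\log n$ turns this into the $1/\sqrt{\log n}$ scale stated in the theorem.

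I expect no real obstacle here, since all the difficulty lives in \cref{cor:arb-lower} and \cref{prop:fixed-sign-symmetric}, which are taken as given. The one point that needs care is justifying that the liminf on the left exceeds the limsup on the right, which is exactly what the chain delivers: the left quantity is at least $2\sqrt{M-1}$, and the right quantity is at most $2\sqrt{(M+1)/3}$, which is strictly smaller.
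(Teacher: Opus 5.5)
Your proposal is correct and matches the paper's proof: the outer links are \cref{cor:arb-lower} and \cref{cor:fp-obstruction}, and the middle link reduces to $2^{b-1}>2$, i.e.\ $b\ge3$. One small loose end is in your interpretive paragraph: the claim that both angles approach $\pi/2$ at rate $\Theta(1/\sqrt{H_n})$ needs a lower bound $\cos F^{\fp}_{n,b}\ge c/\sqrt{H_n}$ that the cited results do not supply (it does hold, e.g.\ from $\{0,\pm1\}\subset\Phi_{e,t}$), but you do not need it, since the theorem's final sentence is just a reading of $\liminf>\limsup$.
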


\begin{proof}
The first inequality is \cref{cor:arb-lower}. The last inequality is
\cref{cor:fp-obstruction}. It remains only to compare the constants. Since
$b\ge3$, we have $2^{b-1}-1>1$, and hence
\[
2^{b-1}-1>\frac{2^{b-1}+1}{3}.
\]
Taking square roots and multiplying by $2$ gives the strict middle inequality.
\end{proof}

\begin{remark}[The four-bit case]
\label{rem:4-bit-prod-code-sep}
For $b=4$,
\cref{thm:main} gives
\[
\liminf_{n\to\infty}\sqrt{H_n}\cos w_{n,16}
\ge
2\sqrt7,
\]
whereas
\[
\limsup_{n\to\infty}\sqrt{H_n}\cos F^{\fp}_{n,4}
\le
\sqrt{12}.
\]
The corresponding angular-deficit ratio is at least
\[
\frac{2\sqrt7}{\sqrt{12}}=\sqrt{\frac73}\approx1.528.
\]
So the normalized asymptotic separation is not merely strict; it has a concrete
constant-factor gap.
\end{remark}

This section focused within the class of product codes by comparing standard scalar formats to the best achievable alphabets under the covering objective. The results show that conventional constructions, including floating-point, fixed-point, and two’s complement representations, are asymptotically suboptimal when measured against an optimal product code of the same bit-width. Although all such formats tend to at least the same limit of the covering objective as dimensionality grows, they do so at strictly worse rates, as quantified in \cref{thm:main} under the normalization we propose. 

This establishes that the choice of scalar alphabet remains a critical factor even within the inherent limitations of product structure, and motivates the search for improved formats that better approximate the optimal directional geometry.

\section{Summary of theoretical results}
The preceding sections establish a theoretical framework for analyzing the angular coverage properties of product-structured codes.

We began with a two-dimensional classification. In this setting, product codes are shown to be suboptimal relative to spherical codes, except in the antipodal alphabet case in which both constructions operate under a total bit budget of two bits and are equivalent. For any larger bit budget, product codes are strictly suboptimal.

The two-dimensional analysis serves as the minimal non-trivial setting in which such a comparison can be made. Combined with the asymptotic analysis presented in Section~\ref{sec:asym-sphere}, this work provides a unified view spanning both low- and high-dimensional regimes. 

The asymptotic case proceeded by constructing a class of directions, the harmonic witnesses, that are provably difficult to represent using product codes. This construction yields a lower bound on the covering performance of product codes. In parallel, an application of Wyner's theorem~\cite{wyner-sphere-packing} provides an upper bound on the covering performance of spherical codes. In the high-dimensional limit, these bounds together establish the existence of a strict gap between the achievable angular coverage of spherical codes and that of any product code. Consequently, product codes are fundamentally suboptimal for angular coverage.

Despite this limitation, for reasons of efficiency, product codes remain the dominant paradigm for vector representation in modern machine learning systems. This motivates the question of whether improved alphabet constructions exist within the product code framework. Section~\ref{sec:asymptotic-sep-prod-code} answers this by quantifying the extent to which the standard formats of floating-point, two's complement, and symmetric fixed-point are suboptimal relative to an optimal product code for the same bit budget. 
\section{Experiments and Results}
\label{sec:experiments}
To complement the theoretical findings, experiments are conducted with product codes in higher dimensions, and empirical evidence is used to compare their angular coverage.

\begin{table}[t]
\caption{Sampled worst-case angular error for 4-bit alphabets across various dimensions (degrees).}
\label{tab:direction_error_dim32}
\centering
\footnotesize
\resizebox{\columnwidth}{!}{
\begin{tabular}{lccccc}
\hline
Datatype & $d=4$ & $d=8$ & $d=16$ & $d=32$ & $d=64$ \\
\hline
\bf{Optimized Alphabet} & \textbf{4.08} & \textbf{5.15} & \textbf{6.07} & \textbf{6.96} & \textbf{7.13} \\
\bf{E2M1} & 5.45 & 5.91 & 6.67 & 7.10 & 7.60 \\
\bf{INT/E1M2} & 6.75 & 8.08 & 9.58 & 10.9 & 11.4 \\
\bf{E3M0} & 10.7 & 10.9 & 11.0 & 11.1 & 11.1 \\
\hline
\end{tabular}
}
\end{table}
\begin{figure}[t]
    \centering
    \label{fig:p99-all-dtypes}
        \includegraphics[width=.9\columnwidth]{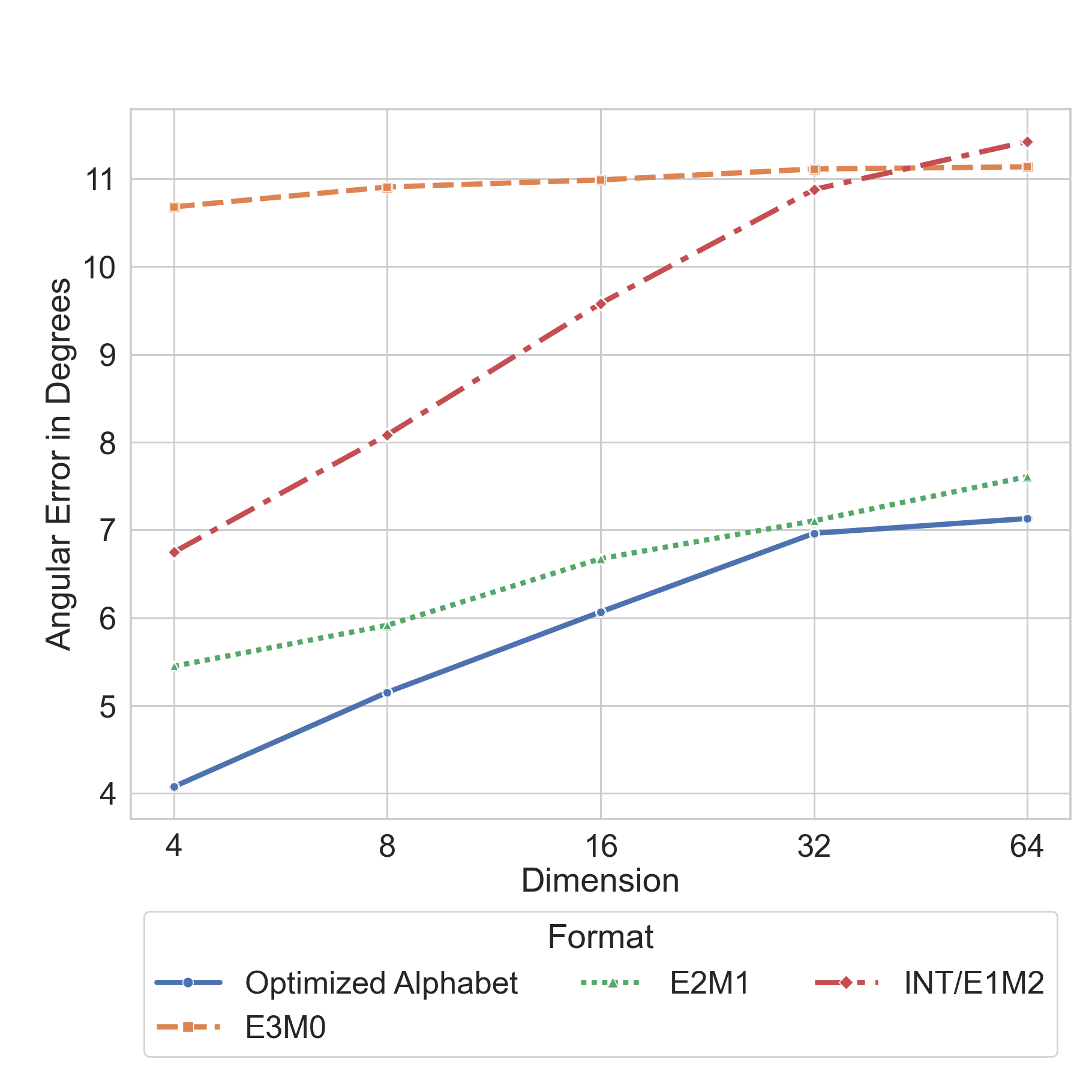}
    \caption{Sampled worst-case angular error with increasing dimension. E2M1, the format used in NVFP4, comes close to the optimized alphabet.}
    \label{fig:angular_error}
\end{figure}
\subsection{Quantifying angular error}
\label{sec:quantifying-angular-error}
As exact evaluation of the worst-case angular error (as measured by the covering objective in \cref{def:prod-code-objective}) over the sphere is intractable, sampling is used to estimate the covering objective. One million unit vectors are randomly generated following the method described in~\cite{sphere-sampling}, and mapped to their nearest codeword in the product code direction set under the angular distance metric. The largest angular error found is recorded.

In order to obtain a high-performance search for the nearest codeword, an extension of the method described in Lemma 3.1 of Gao \textit{et al.}~\cite{min_codeword_alg} is implemented. Specifically, to find the minimum angle between an arbitrary unit vector and a codeword from the product direction set, it is sufficient to perform component-wise nearest-neighbor quantization, but optimizing over all possible vector scales. This effectively reduces the dimensionality of the search to a single dimension (the scale), radically reducing the cost of this experimental search. This method is proven to find the nearest-angle codeword as formalized in \cref{thm:scale-min-codeword}.

All experiments are conducted up to dimension $d=64$. This dimension reflects a balance between computational tractability and practical relevance, as $d=64$ is double the standard block size for micro-scaling formats~\cite{ocp_mx} and $4\times$ that used in NVFP4. Experiments are focused on 4-bit alphabets for similar reasons: 4-bit formats are used in frontier low-precision machine learning models~\cite{nemotron_fp4}, and the time required for the optimization experiments discussed below scales with the number of alphabet elements, which is itself exponential in their bit-width.

\subsection{Optimization of Scalar Alphabets}
\label{sec:optimisation_alphabet}
Given that earlier results have shown optimal scalar alphabets can outperform standard scalar alphabets as the basis of a product code, it is natural to try to optimize the individual elements of a scalar code to optimize directional coverage. We use a derivative-free optimization framework and embed the worst-case angular error computation (described in \cref{sec:quantifying-angular-error}) as an inner evaluation routine over a different, smaller set of randomly generated unit vectors. 

This work adopts a two-stage, hybrid global-local approach. In the first stage, Differential Evolution (DE)~\cite{de_paper} is applied. This algorithm is a derivative-free method well suited to coarse search. In the second phase, the best-performing candidate solution obtained from the DE stage is used to initialize Powell’s method~\cite{powell_paper}. Powell’s method is also a derivative-free algorithm that performs successive line minimization along a set of adaptively updated search directions. Therefore, it is well suited for local optimization and solution refinement. 

The alphabet is forced to be sign-symmetric, as this reduces the complexity of the optimization problem. Therefore, we only optimize over positive values. Furthermore, the inclusion of zero is forced for accurate representation of sparse matrices.

Thus, the optimizer learns seven distinct values, $A_+$ , which define a 4-bit sign-symmetric scalar format $A_+ \cup A_- \cup \{0\}$, where $A_- = -A_+$. The optimized values are shown in \cref{tab:optimised_code_dimensions}, after normalization such that the first nonzero value is one (as detailed in \cref{rem:scaling_invariance}, this does not change coverage). 

\begin{table}[t]
\caption{Optimized sign-symmetric 4-bit alphabet values across various dimensions.}
\label{tab:optimised_code_dimensions}
\centering
\resizebox{\columnwidth}{!}{
\begin{tabular}{l@{\hspace{1.5em}}l}
\hline
Dimension & \multicolumn{1}{c}{Optimized Alphabet} \\
\hline
$d=4$  & $\begin{array}{@{}rrrrrrr@{}}
1 & 2.21 & 3.62 & 5.23 & 7.25 & 9.50 & 11.7
\end{array}$ \\
$d=8$  & $\begin{array}{@{}rrrrrrrr@{}}
1 & 2.26 & 3.67 & 5.42  & 7.54 & 10.5 & 13.0
\end{array}$ \\
$d=16$ & $\begin{array}{@{}rrrrrrrr@{}}
1 & 2.12 & 3.40  & 5.04 & 7.25 & 10.5   & 13.2
\end{array}$ \\
$d=32$ & $\begin{array}{@{}rrrrrrrr@{}}
1 & 2.41  & 3.86 & 5.57 & 6.12 & 7.78 & 15.6
\end{array}$ \\
$d=64$ & $\begin{array}{@{}rrrrrrrr@{}}
1 & 1.94 & 3.25  & 4.82 & 6.90 & 9.86 & 14.9
\end{array}$ \\
\hline
\end{tabular}
}
\end{table}
\subsection{Formats}
The experiments compare multiple 4-bit scalar formats consisting of typical formats alongside the optimized alphabet from the previous section:
\begin{itemize}
    \item \textbf{Optimized Alphabet}: Optimized scalar levels from the procedure discussed above.
    \item \textbf{E2M1}: 4-bit floating-point format with 2 exponent bits and 1 mantissa bit, as used in NVFP4~\cite{nvfp4_dtype}.
    \item \textbf{E3M0}: Consecutive powers of two, i.e. $A = \{0, \pm 2^{k}\}$ for $k \in \{0,\ldots,6\}$. 
    \item \textbf{INT/E1M2}: The two's complement integer representation in four bits. As shown in \cref{lem:one_unmatched_scalar}, this format has equivalent coverage to its fixed-point counterpart, which by \cref{def:canonical-float} is E1M2 (with denormals supported).
\end{itemize}
The worst-case angular error observed is recorded in Table~\ref{tab:direction_error_dim32}, and visualized in Figure~\ref{fig:angular_error}. The optimized alphabet performs the best across all dimensions, including $d=16$, which is the micro-block size used in NVFP4.

Among standard formats, it is observed that E2M1 obtains coverage close to that of the optimized alphabet, outperforming INT/E1M2 and E3M0. 

\subsection{Log-Space Analysis}

Figure~\ref{fig:log-log} shows log-log regressions between the optimized alphabet in $d=16$ and the other formats with fixed unit regression slope.

Unit slope on a log-log plot corresponds to an isometry, with the intercept corresponding to a multiplicative code scaling (see \cref{rem:scaling_invariance}).

It can be seen that E2M1 exhibits strong alignment with our independently optimized alphabet, indicating a near isometry.  E2M1 can thus be interpreted as a specific rounding of the optimized alphabet we propose to the best-fitting 4-bit floating-point family, trading a small reduction in directional fidelity for increased efficiency in computation.

\begin{figure}[htbp]
    \centering
    \subfloat{
        \includegraphics[width=0.9\columnwidth]{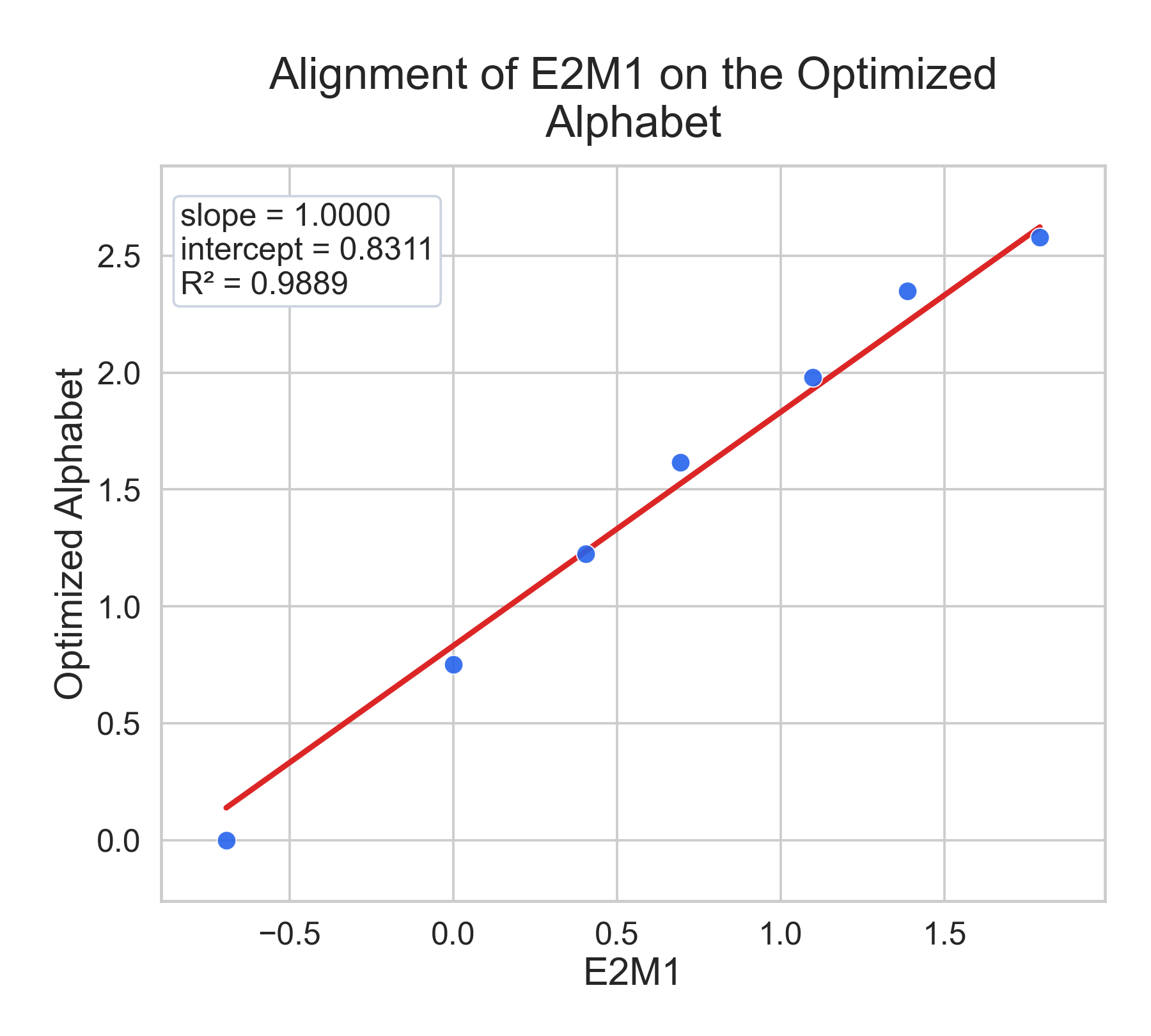}
    }
    \hfill
    \subfloat{
        \includegraphics[width=0.9\columnwidth]{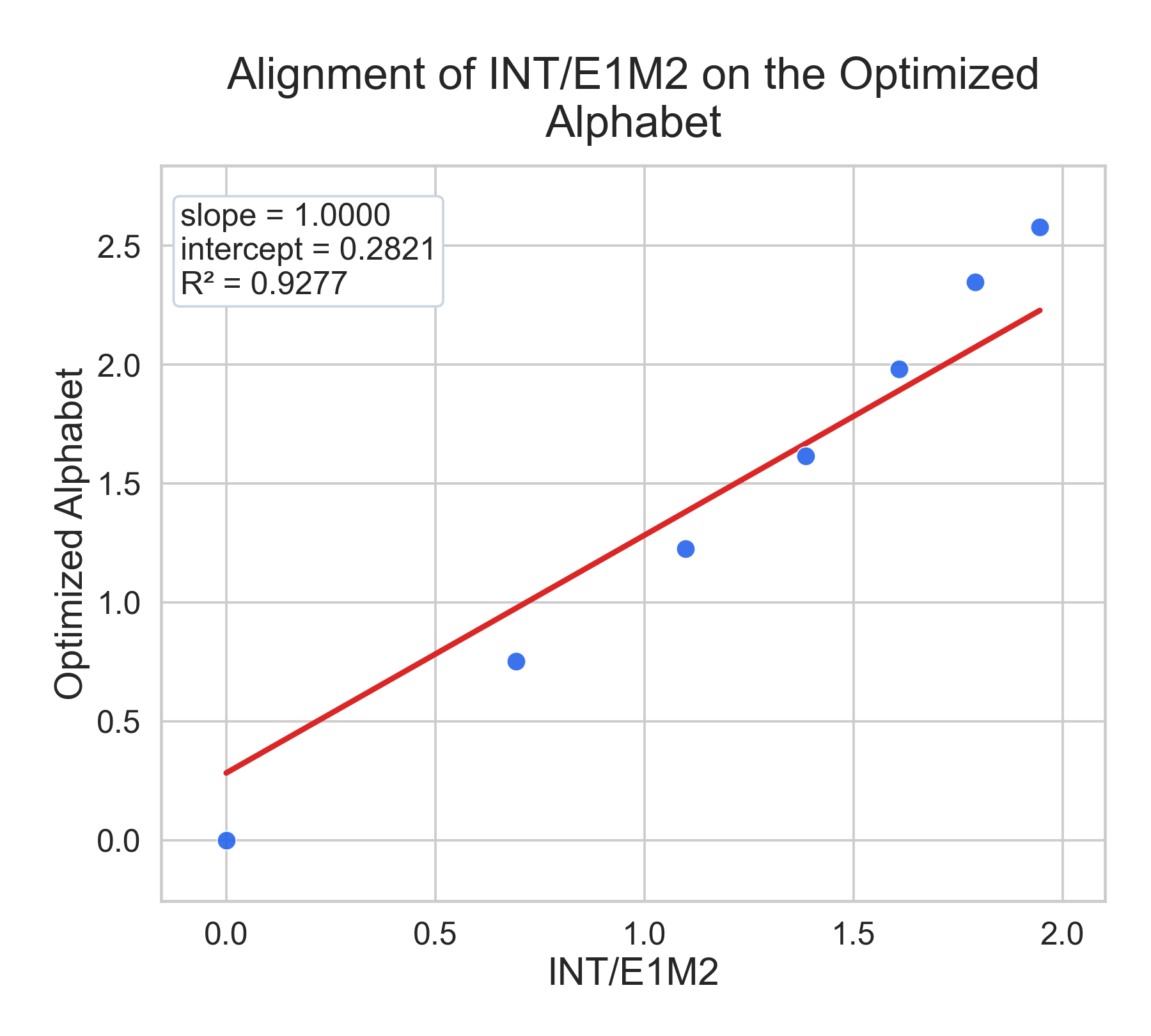}
    }
    \hfill
    \subfloat{
        \includegraphics[width=0.9\columnwidth]{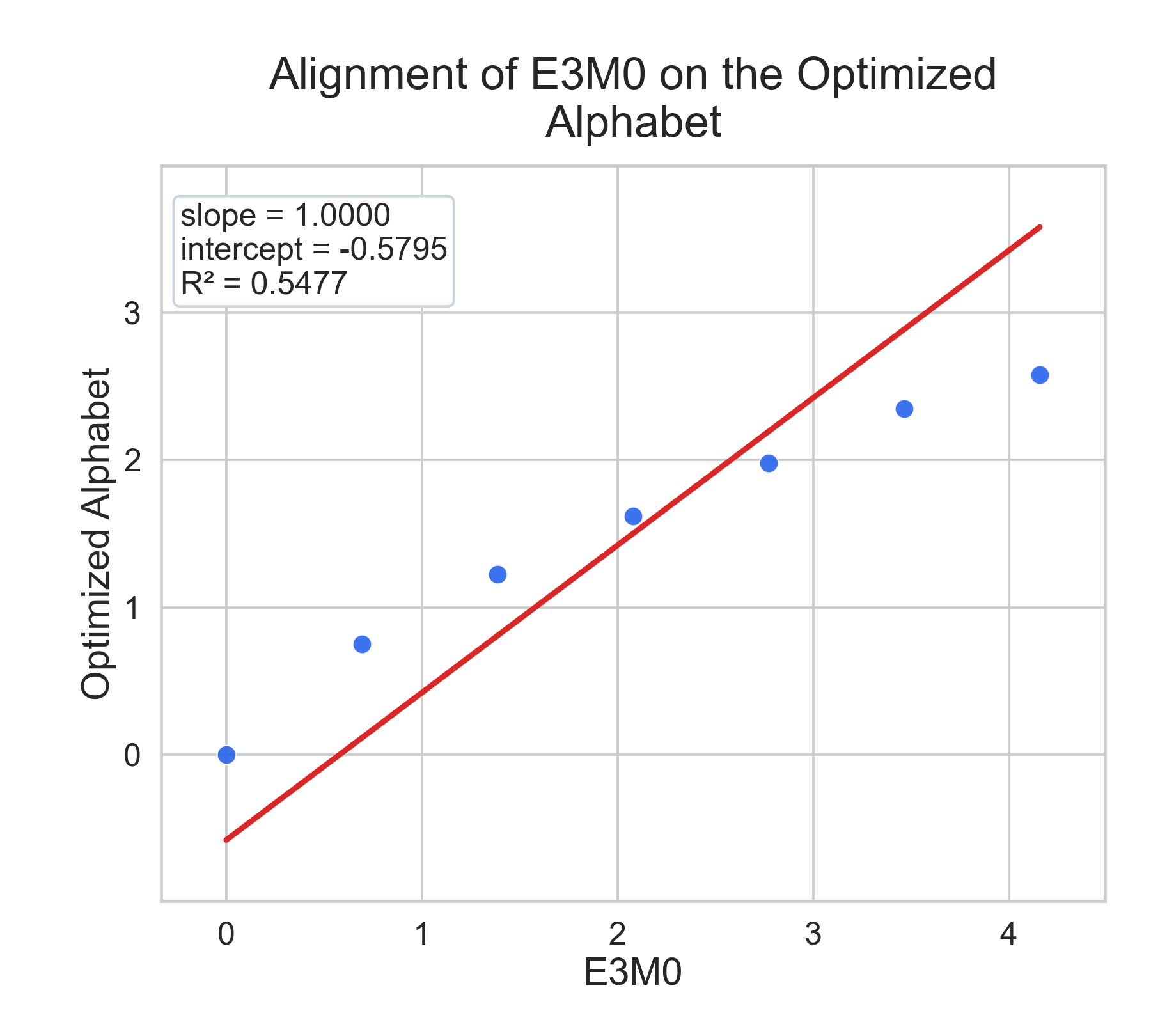}
    }
    \hfill
    \caption{Log-log regressions of various alphabets versus our optimized alphabet for 16 dimensions. The plots show that E2M1 has very strong alignment corresponding to a near isometry with our optimized code. The data point markers show the standard code's representable values.}
    \label{fig:log-log}
\end{figure}
\section{Conclusion}

This work studies the directional properties of product-structured vector codes and analyzes their ability to cover the unit sphere under a worst-case angular error metric, through both theory and experimentation.

The theoretical results establish a fundamental limitation: product-structured codes are inherently suboptimal, with quantified gap, for directional coverage in high dimensions and are at best equivalent in low dimensions. Furthermore, the standard formats of floating-point, two's complement, and fixed-point are suboptimal relative to an arbitrary alphabet, again with quantified gap. Thus, this work motivates research into new scalar formats for direction preservation.

Following this insight, an optimized scalar alphabet is obtained through numerical optimization. Using this, we supported the theoretical results empirically by showing that the standard formats obtain worse coverage than our new alphabet. Notably, the widely used E2M1 format performs similarly to this optimized structure. These findings are explained through log-space analysis, which revealed that E2M1 is well-aligned with the optimized alphabet.

These results provide a geometric perspective on the design of low-precision formats. While scalar representations are fundamentally limited compared to unconstrained spherical codes, carefully designed formats can closely approximate the optimal structure within this class. In particular, the strong performance of E2M1 within modern machine learning quantization can be understood as a consequence of its alignment with the optimal product-induced geometry.

Overall, this work highlights directional coverage as a key consideration in the design and analysis of low-precision representations, and provides both theoretical limits and empirical guidance for future format design.
\section*{Acknowledgments}
The authors acknowledge the use of GPT-5.4 and 5.5 from OpenAI and Aristotle from Harmonic in the development of proofs and experiments for this paper.

\bibliographystyle{IEEEtran}
\bibliography{references}
\newpage\clearpage

\numberwithin{theorem}{section}
\numberwithin{lemma}{section}
\numberwithin{corollary}{section}
\numberwithin{definition}{section}
\numberwithin{remark}{section}

\renewcommand{\thelemma}{\Alph{section}\arabic{lemma}}
\setcounter{lemma}{0}
\renewcommand{\thecorollary}{\Alph{section}\arabic{corollary}}
\setcounter{corollary}{0}

\appendices 
\raggedbottom

\section{Two-dimensional classification}
\label{app:2d-classification}
Following the classification in the main paper, this appendix provides an elementary but complete characterization of the two-dimensional case, as this is the first dimension at which spherical and product codes diverge, and can be easily visualized. We begin with a lemma that formalizes the intuition that spherical codes are no worse than product codes for the same number of induced directions.

\begin{lemma}[Pointwise comparison with spherical codes]
\label{lem:pointwise-comparison}
For every $n\ge 2$ and every finite alphabet $A\subset \R$,
\[
\rho_{\mathrm{sph}}\bigl(n,|\Pcode_n(A)|\bigr)\le F_n(A).
\]
\end{lemma}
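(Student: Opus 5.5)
The plan is to observe that $\Pcode_n(A)$ is itself an admissible competitor in the infimum defining $\rho_{\mathrm{sph}}$, so the inequality is essentially a restatement of the definitions. Set $C:=\Pcode_n(A)$. By \cref{def:prod_dir}, $C\subset\Sph^{n-1}$, and $C$ is finite because $A^n$ is finite. By \cref{def:prod-code-objective}, $F_n(A)=\covrad(C)$ exactly, since both are $\sup_{u}\min_{c\in C}\angles{u}{c}$.

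The only care needed concerns whether $m:=|C|\ge 1$. If $A\subseteq\{0\}$ then $A^n\setminus\{0\}=\emptyset$ and $C=\emptyset$. In that case the minimum over an empty set is conventionally $+\infty$ (or $F_n(A)$ is undefined), and the statement is either vacuous or trivially true. Since $A\in\mathcal A_q$ with $q\ge2$ contains a nonzero element, I will assume $C\neq\emptyset$ and note the degenerate case separately. Then $m\ge1$, $C$ is a subset of $\Sph^{n-1}$ with $|C|=m$, and the definition of $\rho_{\mathrm{sph}}(n,m)$ as an infimum over all such sets gives
\[
\rho_{\mathrm{sph}}(n,m)\le \covrad(C)=F_n(A).
\]

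There is no real obstacle here. The one subtle point is that the comparison is made at cardinality $|\Pcode_n(A)|$, which may be far smaller than $|A|^n$ because of collisions under normalization (for example, $x$ and $\lambda x$ both lying in $A^n$). The inequality against $\rho_{\mathrm{sph}}(n,|A|^n)$, which is what the later theorems use, then follows from monotonicity: $\rho_{\mathrm{sph}}(n,m)$ is nonincreasing in $m$. To see this, add arbitrary extra points to an optimal or near-optimal $m$-point code; since adding points can only decrease each $\min_{c\in C}\angles{u}{c}$, the covering radius cannot increase. I would state this monotonicity as a one-line remark after the proof, since $|\Pcode_n(A)|\le|A|^n-1<|A|^n$.
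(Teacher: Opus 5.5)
Your argument is correct and is the same as the paper's: $\Pcode_n(A)$ is itself an $|\Pcode_n(A)|$-point code on $\Sph^{n-1}$ whose covering radius is exactly $F_n(A)$, so it is an admissible competitor in the infimum defining $\rho_{\mathrm{sph}}$. Your treatment of the degenerate case $\Pcode_n(A)=\emptyset$ is extra care the paper omits.

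One small correction to your closing remark: $|\Pcode_n(A)|\le|A|^n-1$ is false in general. For example, $A=\{-1,1\}$ gives $2^n$ distinct sign directions, which is exactly the equality case in \cref{thm:dim2-classification}. Only $|\Pcode_n(A)|\le|A|^n$ holds in general, and that is all the monotonicity step needs.
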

\begin{proof}
By definition, $\Pcode_n(A)$ is itself a spherical code in $\Sph^{n-1}$ with exactly
$|\Pcode_n(A)|$ points, and its covering radius is exactly $F_n(A)$. The claim
follows because $\rho_{\mathrm{sph}}(n,m)$ is the infimum of the covering
radii over all $m$-point spherical codes. Therefore, a product code may never outperform a spherical code in angular coverage.
\end{proof}

We will also need to use the fact that adding directions to a spherical code can never make coverage worse:

\begin{lemma}[Antitonicity in the code size]
\label{lem:rho-antitone}
Let $n\ge 2$ and $1\le m_1\le m_2$. Then
\[
\rho_{\mathrm{sph}}(n,m_2)\le \rho_{\mathrm{sph}}(n,m_1).
\]
\end{lemma}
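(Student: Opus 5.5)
The plan is to prove antitonicity by padding. Any $m_1$-point code can be enlarged to an $m_2$-point code whose covering radius is no larger, and then we take infima.

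Fix $C\subset\Sph^{n-1}$ with $|C|=m_1$. Since $n\ge2$, the sphere $\Sph^{n-1}$ is infinite. We can therefore choose $m_2-m_1$ distinct points of $\Sph^{n-1}\setminus C$ and adjoin them to $C$, obtaining a code $C'\supseteq C$ with $|C'|=m_2$ exactly. This is the only point needing care: the definition of $\rho_{\mathrm{sph}}$ requires exactly $m_2$ points, so the added points must be new ones rather than repeats.

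For every $u\in\Sph^{n-1}$, minimizing over the larger set can only decrease the value:
\[
\min_{c\in C'}\angles{u}{c}\le\min_{c\in C}\angles{u}{c}.
\]
Taking the supremum over $u$ gives $\covrad(C')\le\covrad(C)$.

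It follows that
\[
\rho_{\mathrm{sph}}(n,m_2)\le\covrad(C')\le\covrad(C).
\]
Since $C$ was an arbitrary $m_1$-point code, taking the infimum over all such $C$ yields $\rho_{\mathrm{sph}}(n,m_2)\le\rho_{\mathrm{sph}}(n,m_1)$.

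There is no real obstacle here. The only subtle step is the existence of enough fresh points, and that is supplied by the infinitude of the sphere for $n\ge2$.
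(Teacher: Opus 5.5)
Your proposal is correct and follows the same padding argument as the paper: adjoin $m_2-m_1$ extra points to an arbitrary $m_1$-point code, observe that this cannot increase the covering radius, and take the infimum. Your explicit check that the adjoined points can be chosen distinct and new, which is possible because $\Sph^{n-1}$ is infinite for $n\ge 2$, makes precise a detail the paper leaves implicit.
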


\begin{proof}
Every $m_1$-point spherical code becomes an admissible competitor for the
$m_2$-point problem after adjoining any $m_2-m_1$ extra points. Adding points
cannot increase covering radius. Taking the infimum over all $m_1$-point codes
gives the claim.
\end{proof}

In the main body of the paper, we stated the result that characterizes the exact spherical coverage optimum in 2D. Its proof is here for completeness:

\begin{proof}[\textbf{Proof of \cref{prop:circle}} (page \pageref{prop:circle})]
Let 
\[C=\{c_1,\dots,c_m\}\subset \Sph^1\]
listed in cyclic order around the
circle.
Let $\gamma_1,\dots,\gamma_m$ be the angular gaps between consecutive
points, so
\[
\gamma_i\ge0
\qquad\text{and}\qquad
\sum_{i=1}^m \gamma_i=2\pi.
\]
Hence
\[
\max_i \gamma_i\ge \frac{2\pi}{m}.
\]
The midpoint of a largest gap is at angular distance
$\frac12\max_i\gamma_i\ge \pi/m$ from every point of $C$, so
\[
\covrad(C)\ge \frac{\pi}{m}.
\]
Equality holds if and only if every gap equals $2\pi/m$, equivalently the
points are equally spaced.
Taking the infimum over all $m$-point sets $C$ shows that
$\rho_{\mathrm{sph}}(2,m)=\pi/m$.
\end{proof}

We now use a key feature of product codes to demonstrate one (combinatorial) obstruction to their efficient angular coverage: different codewords can represent the same angle:

\begin{lemma}[Collisions in dimension $2$]
\label{lem:dim2-collision}
Let $A\subset \R$ be finite with $q:=|A|\ge 2$. If at least one of the
following holds,
\begin{enumerate}[label=(\roman*),nosep]
    \item $0\in A$,
    \item $A$ contains two distinct positive elements,
    \item $A$ contains two distinct negative elements,
\end{enumerate}
then
\[
|\Pcode_2(A)|\le q^2-1.
\]
Consequently,
\[
F_2(A)>\frac{\pi}{q^2}=\rho_{\mathrm{sph}}(2,q^2).
\]
\end{lemma}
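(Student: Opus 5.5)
The plan is to exhibit, in each of the three cases, a single ``collision'': either a nonzero vector of $A^2$ that normalizes to the same direction as a different vector, or the zero vector, which contributes no direction. Either way the map $A^2\setminus\{0\}\to\Pcode_2(A)$ fails to account for all $q^2$ elements of $A^2$. Hence $|\Pcode_2(A)|\le q^2-1$.

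\textbf{Case (i).} If $0\in A$, then $(0,0)\in A^2$ is excluded in \cref{def:prod_dir}. So at most $q^2-1$ vectors produce directions.

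\textbf{Case (ii).} If $A$ contains distinct positive elements $a<b$, then $(a,a)$ and $(b,b)$ are distinct nonzero elements of $A^2$. Both normalize to $(1,1)/\sqrt2$, so at most $q^2-1$ distinct directions arise.

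\textbf{Case (iii).} This is identical, using $(a,a)$ and $(b,b)$ with $a,b<0$ distinct. Both map to $-(1,1)/\sqrt2$.

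For the consequence, set $m:=|\Pcode_2(A)|$, so $1\le m\le q^2-1$. Nonemptiness holds because $q\ge2$ guarantees a nonzero element of $A$. Then:
\begin{itemize}
\item \cref{lem:pointwise-comparison} gives $F_2(A)\ge\rho_{\mathrm{sph}}(2,m)$.
\item \cref{prop:circle} evaluates this exactly as $\pi/m$.
\item Since $m\le q^2-1$, we get $\pi/m\ge\pi/(q^2-1)>\pi/q^2$.
\item \cref{prop:circle} again gives $\pi/q^2=\rho_{\mathrm{sph}}(2,q^2)$.
\end{itemize}
Chaining these yields $F_2(A)>\pi/q^2=\rho_{\mathrm{sph}}(2,q^2)$. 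Alternatively, one could go through \cref{lem:rho-antitone}, but that only gives a non-strict inequality, so the explicit formula $\pi/m$ is what delivers strictness.

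There is no real obstacle here. The only care needed is to use the exact value $\pi/m$ rather than mere monotonicity, so that the strict inequality survives, and to note that $\Pcode_2(A)$ is nonempty so that $m\ge1$ and \cref{prop:circle} applies.
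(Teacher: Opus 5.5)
Your proof is correct and matches the paper's: the same three collisions (the excluded zero vector, and $(a,a)$ versus $(b,b)$ for distinct same-sign $a,b$) give $|\Pcode_2(A)|\le q^2-1$, and the consequence follows from \cref{lem:pointwise-comparison} together with \cref{prop:circle}. The one cosmetic difference is that the paper goes through \cref{lem:rho-antitone} to reach $\rho_{\mathrm{sph}}(2,q^2-1)$ and then evaluates it exactly as $\pi/(q^2-1)>\pi/q^2$. That route also keeps strictness, so your remark that the antitone route loses it is inaccurate; it just evaluates the formula at $q^2-1$ instead of at $m$.
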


\begin{proof}
If $0\in A$, then $A^2\setminus\{0\}$ has only $q^2-1$ raw vectors, so
$|\Pcode_2(A)|\le q^2-1$.

If $a,b\in A$ are distinct positive elements, then the two raw vectors
$(a,a)$ and $(b,b)$ determine the same direction. Hence at least one collision occurs among the $q^2$ ordered
pairs in $A^2$, so again $|\Pcode_2(A)|\le q^2-1$.

The same argument applies if $A$ contains two distinct negative elements.

Now apply \cref{lem:pointwise-comparison,lem:rho-antitone} and
\cref{prop:circle}:
\begin{align*}
F_2(A)
&\ge \rho_{\mathrm{sph}}\bigl(2,|\Pcode_2(A)|\bigr)\\
&\ge \rho_{\mathrm{sph}}(2,q^2-1)\\
&= \frac{\pi}{q^2-1}\\
&> \frac{\pi}{q^2}.
\end{align*}
Since $\rho_{\mathrm{sph}}(2,q^2)=\pi/q^2$, the claimed strict inequality
follows.
\end{proof}

This leaves one remaining special case to characterize in two dimensions, which is covered over the next two lemmas.

\begin{lemma}[No-collision case]
\label{lem:no-collision-case}
Let $A\subset \R$ be finite with $|A|\ge 2$. If none of the three conditions in
\cref{lem:dim2-collision} holds, then
\[
A=\{-u,v\}
\qquad\text{for some }u,v>0.
\]
In particular, $|A|=2$.
\end{lemma}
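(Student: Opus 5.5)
The plan is a direct counting argument on the sign pattern of $A$, using only the negations of the three conditions in \cref{lem:dim2-collision}.

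First, suppose none of (i)--(iii) holds. Since (i) fails, $0\notin A$, so every element of $A$ is either strictly positive or strictly negative. Hence
\[
A = (A\cap(0,\infty))\cup(A\cap(-\infty,0)).
\]
Since (ii) fails, $p_+(A)=|A\cap(0,\infty)|\le 1$. Since (iii) fails, $p_-(A)\le 1$. Therefore $|A| = p_+(A)+p_-(A)\le 2$. Combined with the hypothesis $|A|\ge 2$, this forces $p_+(A)=p_-(A)=1$.

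So $A$ contains exactly one positive element, say $v>0$, and exactly one negative element. Writing the negative element as $-u$ with $u>0$ gives $A=\{-u,v\}$ and $|A|=2$, as claimed.

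There is no real obstacle here. The only point needing care is to use the fact that $0\notin A$ to ensure the positive and negative parts exhaust $A$, so that the two cardinality bounds add up to a bound on $|A|$ itself.
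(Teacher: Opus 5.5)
Your proof is correct and is essentially the paper's argument: negating (i)--(iii) gives $0\notin A$ and at most one element of each sign, so $|A|\ge 2$ forces exactly one positive and one negative element. Your explicit note that $0\notin A$ makes the two sign classes exhaust $A$ states outright the step the paper leaves implicit.
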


\begin{proof}
If none of the three conditions holds, then $0\notin A$, $A$ contains at most
one positive element, and $A$ contains at most one negative element. Since
$|A|\ge 2$, it follows that $A$ has exactly one positive and exactly one
negative element, so
\[
A=\{-u,v\}
\qquad\text{for some }u,v>0.
\]
\end{proof}

\begin{lemma}[Binary no-collision case]
\label{prop:binary-case}
Let
\[
A=\{-u,v\}
\qquad (u,v>0).
\]
Then the following are equivalent:
\begin{enumerate}[label=(\roman*),nosep]
    \item $F_2(A)=\pi/4$,
    \item $\Pcode_2(A)$ is an equally spaced $4$-point code on $\Sph^1$,
    \item $u=v$.
\end{enumerate}
Consequently,
\begin{align*}
F_2(A)
&= \frac{\pi}{4}
&&\text{if } u=v,\\
F_2(A)
&> \frac{\pi}{4}
&&\text{if } u\ne v.
\end{align*}
\end{lemma}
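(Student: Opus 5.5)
The plan is to compute $\Pcode_2(A)$ explicitly and then read off the gaps between consecutive directions. Since $A=\{-u,v\}$ with $u,v>0$, the four raw vectors are
\[
(v,v),\quad (-u,v),\quad (-u,-u),\quad (v,-u).
\]
None of them is zero, so we can normalize directly. This gives the directions $\pi/4$ and $5\pi/4$ exactly. The other two directions are $\pi-\arctan(v/u)$, lying in the second quadrant, and $2\pi-\arctan(u/v)$, lying in the fourth quadrant. All four directions are distinct because they sit in different open quadrants or on different diagonal rays, so $|\Pcode_2(A)|=4$.

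Next we compute the cyclic gaps. Set $\phi:=\arctan(v/u)\in(0,\pi/2)$, so that $\arctan(u/v)=\pi/2-\phi$. The points in cyclic order are then
\[
\pi/4,\quad \pi-\phi,\quad 5\pi/4,\quad 3\pi/2+\phi.
\]
The gaps between consecutive points are
\[
3\pi/4-\phi,\quad \pi/4+\phi,\quad \pi/4+\phi,\quad 3\pi/4-\phi,
\]
where the last gap wraps from $3\pi/2+\phi$ around to $2\pi+\pi/4$. As a check, these four gaps sum to $2\pi$.

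The gaps are all equal to $\pi/2$ exactly when $\phi=\pi/4$, that is, when $u=v$. This gives (ii)$\Leftrightarrow$(iii).

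For (i)$\Leftrightarrow$(ii) we use \cref{prop:circle}. Since the code has $4$ points, $\covrad\ge\pi/4$, with equality if and only if the points are equally spaced. We could also argue directly: the covering radius is half the largest gap, namely $\tfrac12\max\{3\pi/4-\phi,\ \pi/4+\phi\}$. This is at least $\pi/4$, with equality iff $\phi=\pi/4$.

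The consequence then follows. If $u=v$, we get $F_2(A)=\pi/4$. If $u\ne v$, then $F_2(A)\ge\pi/4$ by \cref{prop:circle}, and equality is ruled out by the equivalence, so $F_2(A)>\pi/4$.

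The only delicate step is making sure the cyclic ordering and the quadrant placement are handled correctly, in particular the wraparound gap. Once the parametrization by $\phi$ is fixed, everything else is routine.
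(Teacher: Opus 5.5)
Your proof is correct. For (i)$\Leftrightarrow$(ii) you use \cref{prop:circle} exactly as the paper does; for (ii)$\Leftrightarrow$(iii) you take a different and more reliable route.

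The paper asserts that the four directions form two antipodal pairs, $\pm(1,1)/\sqrt2$ and $\pm(-u,v)/\sqrt{u^2+v^2}$. It then reduces equal spacing to orthogonality of the lines $\R(1,1)$ and $\R(-u,v)$. That identification is false when $u\neq v$. The direction of $(v,-u)$ is the reflection of the direction of $(-u,v)$ in the diagonal $y=x$, not its antipode $(u,-v)/\sqrt{u^2+v^2}$. So $\Pcode_2(A)$ is invariant under swapping coordinates but is not centrally symmetric.

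Your explicit gap computation handles this correctly: the opposite gaps $3\pi/4-\phi$ and $\pi/4+\phi$ differ unless $\phi=\pi/4$. It also gives the exact value $F_2(A)=\pi/4+\tfrac12|\phi-\pi/4|$, which quantifies the strict inequality when $u\neq v$. The cost is the cyclic-order bookkeeping you already flagged.

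The paper's shortcut can be repaired. An equally spaced $4$-point code containing $(1,1)/\sqrt2$ must be $\{\pm(1,1)/\sqrt2,\pm(1,-1)/\sqrt2\}$. That forces the second-quadrant point $(-u,v)/\sqrt{u^2+v^2}$ to equal $(-1,1)/\sqrt2$, i.e.\ $u=v$.
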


\begin{proof}
The four raw vectors in $A^2$ are
\[
(-u,-u),\quad (-u,v),\quad (v,-u),\quad (v,v).
\]
These yield two antipodal pairs of directions:
\[
\pm \frac{(1,1)}{\sqrt2}
\qquad\text{and}\qquad
\pm \frac{(-u,v)}{\sqrt{u^2+v^2}}.
\]
Thus $\Pcode_2(A)$ is the union of the two lines
$\R(1,1)$ and $\R(-u,v)$ intersected with the unit circle.

A $4$-point subset of the circle has covering radius $\pi/4$ if and only if
its four points are equally spaced, by \cref{prop:circle}. For a union of two
antipodal pairs, equal spacing is equivalent to the two underlying lines being
orthogonal. Here orthogonality means
\[
(1,1)\cdot(-u,v)=0,
\]
that is,
\[
-u+v=0.
\]
So equal spacing holds exactly when $u=v$.

Therefore the three stated conditions are equivalent, and the final claim again
follows from \cref{prop:circle}.
\end{proof}

Note that, in particular, this antipodal special case only occurs with binary alphabets; spherical codes are strictly better beyond the binary case:

\begin{corollary}[Uniform dimension-$2$ strict separation for $q\ge 3$]
\label{cor:dim2-qge3}
For every $q\ge 3$ and every alphabet $A\in \mathcal A_q$,
\[
\rho_{\mathrm{sph}}(2,q^2)<F_2(A).
\]
Hence also
\[
\rho_{\mathrm{sph}}(2,q^2)<w_{2,q}.
\]

For $q=2$, equality occurs precisely for the antipodal alphabets
$A=\{-a,a\}$.
\end{corollary}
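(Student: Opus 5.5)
For $q\ge3$ we argue that every alphabet $A\in\mathcal A_q$ satisfies at least one of the three collision conditions of \cref{lem:dim2-collision}, and then invoke that lemma. Suppose none holds. Then \cref{lem:no-collision-case} forces $A=\{-u,v\}$ with $u,v>0$, so $|A|=2$, contradicting $q\ge3$. Hence one condition holds, and \cref{lem:dim2-collision} gives $F_2(A)>\pi/q^2=\rho_{\mathrm{sph}}(2,q^2)$ for every $A\in\mathcal A_q$.

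To pass to $w_{2,q}$, the pointwise strict inequality is not enough, because an infimum of values each exceeding $\pi/q^2$ could still equal $\pi/q^2$. We use the quantitative form from the proof of \cref{lem:dim2-collision}: each such $A$ satisfies
\[
F_2(A)\ge\rho_{\mathrm{sph}}(2,q^2-1)=\frac{\pi}{q^2-1}.
\]
This bound is uniform in $A$, so taking the infimum gives
\[
w_{2,q}\ge\frac{\pi}{q^2-1}>\frac{\pi}{q^2}=\rho_{\mathrm{sph}}(2,q^2).
\]

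For $q=2$, we split into cases. If $A$ satisfies a collision condition, it is strict by \cref{lem:dim2-collision}. Otherwise $A=\{-u,v\}$, and \cref{prop:binary-case} gives $F_2(A)=\pi/4=\rho_{\mathrm{sph}}(2,4)$ exactly when $u=v$, i.e.\ when $A=\{-a,a\}$. In all other cases the inequality is strict. This is the same case analysis as in \cref{thm:dim2-classification}.

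The only real obstacle is the uniformity step for $w_{2,q}$. It is resolved by using the bound $\pi/(q^2-1)$ rather than the bare strict inequality.
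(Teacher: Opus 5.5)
Your proposal is correct and follows the paper's route: the paper cites \cref{thm:dim2-classification}, and that theorem is proved by exactly your collision/no-collision split via \cref{lem:dim2-collision,lem:no-collision-case,prop:binary-case}. Your one addition is worthwhile. The paper reaches $w_{2,q}$ by asserting that the inequality is uniform over $A\in\mathcal A_q$, but a pointwise strict inequality only gives $\rho_{\mathrm{sph}}(2,q^2)\le w_{2,q}$; your explicit uniform bound $F_2(A)\ge \pi/(q^2-1)$ is what actually makes the strict inequality for the infimum rigorous.
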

\begin{proof}
If $q\ge 3$, the exceptional case $A=\{-a,a\}$ is impossible, so the strict statement follows from \cref{thm:dim2-classification}. Since the inequality is uniform over $A\in\mathcal A_q$, taking the infimum over all such alphabets yields the statement for $w_{2,q}$. The last sentence is exactly the equality case in
\cref{thm:dim2-classification}.
\end{proof}

\section{A harmonic witness lower bound}
\label{app:harmonic_witness}

In this section, we derive the bound for the asymptotic case of large dimensions (for fixed alphabet). 

We will need to make use of a standard rearrangement inequality:

\begin{lemma}[Rearrangement inequality~\cite{HLP1952}]
\label{lem:weighted-sorting}
Let $y=(y_1,\dots,y_n)\in \R^n$ satisfy
\[
y_1\ge y_2\ge \cdots \ge y_n.
\]
For any $x=(x_1,\dots,x_n)\in \R^n$, let
\[
x^\downarrow=(x_1^\downarrow,\dots,x_n^\downarrow)
\]
denote the vector obtained by rearranging the coordinates of $x$ in
nonincreasing order. Then
\[
\sum_{i=1}^n y_i x_i
\le
\sum_{i=1}^n y_i x_i^\downarrow.
\]
\end{lemma}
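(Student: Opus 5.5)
The statement to prove is the rearrangement inequality of \cref{lem:weighted-sorting}: if $y_1\ge\cdots\ge y_n$, then $\sum_i y_ix_i\le\sum_i y_ix_i^\downarrow$ for every $x\in\R^n$. I plan to prove it by an exchange (bubble-sort) argument. The one-step tool is the two-term inequality: for $y_i\ge y_j$ and $a\le b$,
\[
y_i b + y_j a - (y_i a + y_j b) = (y_i-y_j)(b-a)\ge 0.
\]
So whenever two positions $i<j$ hold entries with $x_i<x_j$ (an inversion relative to the nonincreasing order), swapping these two entries does not decrease $\sum_k y_kx_k$.

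The main step is to apply such swaps repeatedly until $x$ becomes $x^\downarrow$. I will induct on the number of inversions, i.e.\ pairs $i<j$ with $x_i<x_j$. If this number is zero, then $x$ is already nonincreasing and $x=x^\downarrow$, since the sorted rearrangement of a multiset is unique. Otherwise there is an adjacent inversion $x_i<x_{i+1}$. Swapping positions $i$ and $i+1$ changes the order of only that one pair, so it lowers the inversion count by exactly one. The swap does not change the multiset of entries, so the sorted vector $x^\downarrow$ is the same as before. By the two-term inequality with $y_i\ge y_{i+1}$, the weighted sum does not decrease. Applying the induction hypothesis to the swapped vector then gives the claim.

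An alternative I could use instead is Abel summation. Set $S_k=\sum_{i\le k}x_i$ and $S_k^\downarrow=\sum_{i\le k}x_i^\downarrow$. Then
\[
\sum_i y_ix_i=\sum_{k<n}(y_k-y_{k+1})S_k+y_nS_n .
\]
The same identity holds for $x^\downarrow$ with $S_k^\downarrow$ in place of $S_k$. Here $S_n=S_n^\downarrow$, and $S_k\le S_k^\downarrow$ for every $k$, because the $k$ largest entries have the largest possible sum among any $k$ entries. Since each coefficient $y_k-y_{k+1}$ is nonnegative, the inequality follows term by term. No sign condition on $y_n$ is needed, because $S_n=S_n^\downarrow$ makes the final terms equal.

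The only delicate point is bookkeeping with ties: equal entries must not be counted as inversions, and the induction must terminate. Counting only strict inversions ($x_i<x_j$) handles both issues. For that reason I would present the Abel summation version as the primary proof, since it avoids these termination details altogether.
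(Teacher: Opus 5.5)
Both of your arguments are correct. The paper itself gives no proof of this lemma: it simply cites Hardy--Littlewood--P\'olya.

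\textbf{Exchange route.} The classical proof in that reference is the interchange argument, so this route is essentially the standard one, and your bookkeeping is sound. An adjacent strict ascent $x_i<x_{i+1}$ must exist whenever any strict inversion exists, since otherwise $x$ would be nonincreasing. An adjacent swap changes the relative order of exactly one pair, so the strict-inversion count drops by one and the induction terminates.

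\textbf{Abel-summation route.} This is a genuinely different proof. It reduces the claim to the partial-sum dominance $S_k\le S_k^\downarrow$ with $S_n=S_n^\downarrow$, which says that $x^\downarrow$ majorizes every rearrangement of $x$. The monotonicity of $y$ then enters only through the nonnegative coefficients $y_k-y_{k+1}$.

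\textbf{Comparison.} The Abel version avoids induction and tie-handling entirely. It is also the same summation-by-parts device the paper uses in \cref{lem:quadratic-identity}. The exchange argument is more constructive: it exhibits an explicit sequence of swaps along which $\sum_k y_kx_k$ never decreases. Either proof would make the lemma self-contained rather than a citation.
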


\begin{proof}
See~\cite{HLP1952}.
\end{proof}

The partial sum of the coordinates of the harmonic witness from \cref{def:harmonic_witness} can be bounded as follows.
\begin{lemma}[Partial sums of the harmonic witness]
\label{lem:partial-sums}
For every integer $m\ge 1$,
\[
\sum_{i=1}^m \frac1{\sqrt{i}}\le 2\sqrt{m}.
\]
Consequently,
\[
\sum_{i=1}^m u^{(n)}_i \le 2\sqrt{\frac{m}{H_n}}
\qquad (1\le m\le n).
\]
\end{lemma}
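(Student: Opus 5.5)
The plan is to prove the elementary inequality $\sum_{i=1}^m i^{-1/2}\le 2\sqrt m$ by a telescoping comparison and then read off the consequence for the harmonic witness. The key pointwise estimate is that for every integer $i\ge 1$,
\[
\frac{1}{\sqrt i}\le 2\bigl(\sqrt i-\sqrt{i-1}\bigr).
\]
To verify it, rationalize the right-hand side:
\[
2\bigl(\sqrt i-\sqrt{i-1}\bigr)=\frac{2}{\sqrt i+\sqrt{i-1}}.
\]
Since $\sqrt{i-1}\le\sqrt i$, the denominator is at most $2\sqrt i$, so this quantity is at least $1/\sqrt i$. For $i=1$ the bound reads $1\le 2$, which is consistent.

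Summing the estimate over $i=1,\dots,m$ telescopes to
\[
\sum_{i=1}^m \frac{1}{\sqrt i}\le 2\bigl(\sqrt m-\sqrt 0\bigr)=2\sqrt m.
\]
An alternative route is the integral comparison $1/\sqrt i\le\int_{i-1}^{i}t^{-1/2}\,dt$, using that $t^{-1/2}$ is decreasing; this gives the same bound, since $\int_0^m t^{-1/2}\,dt=2\sqrt m$. The singularity at $0$ is integrable, so this route also works. The telescoping version is preferable because it is purely algebraic and easier to formalize.

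For the consequence, I will use \cref{def:harmonic_witness}, which gives $u^{(n)}_i=1/\sqrt{iH_n}$ for $1\le i\le n$. Hence, for $1\le m\le n$,
\[
\sum_{i=1}^m u^{(n)}_i=\frac{1}{\sqrt{H_n}}\sum_{i=1}^m\frac{1}{\sqrt i}\le \frac{2\sqrt m}{\sqrt{H_n}}=2\sqrt{\frac{m}{H_n}}.
\]
This step uses $H_n>0$ and the restriction $m\le n$, which guarantees that the coordinates $u^{(n)}_1,\dots,u^{(n)}_m$ exist.

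There is no real obstacle in this argument. The only step needing care is the pointwise inequality, and in particular its boundary case $i=1$, where $\sqrt{i-1}=0$.
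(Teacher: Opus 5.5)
Your proof is correct and matches the paper's argument essentially verbatim: you prove the pointwise bound $1/\sqrt{i}\le 2(\sqrt{i}-\sqrt{i-1})$ by rationalizing, telescope the sum to $2\sqrt{m}$, and then divide by $\sqrt{H_n}$ using $u^{(n)}_i=1/\sqrt{iH_n}$. Your integral-comparison aside is also valid, but the proof does not need it.
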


\begin{proof}
For each $i\ge 1$,
\[
\frac1{\sqrt{i}}
\le
2\bigl(\sqrt{i}-\sqrt{i-1}\bigr),
\]
because this inequality is equivalent to
\[
1\le 2\sqrt{i}\bigl(\sqrt{i}-\sqrt{i-1}\bigr)
=
\frac{2\sqrt{i}}{\sqrt{i}+\sqrt{i-1}}.
\]
Summing from $i=1$ to $m$ yields
\[
\sum_{i=1}^m \frac1{\sqrt{i}}
\le
2\sum_{i=1}^m \bigl(\sqrt{i}-\sqrt{i-1}\bigr)
=
2\sqrt{m}.
\]
Dividing by $\sqrt{H_n}$ gives the second statement.
\end{proof}

This bound can, in turn, be used to place an upper bound on the inner product of the harmonic witness with vectors populated by elements drawn from a finite alphabet.

\begin{lemma}[Step-vector estimate]
\label{lem:step-vector}
Let $n\ge 1$, and let $z\in [0,\infty)^n$ be nonincreasing:
\[
z_1\ge z_2\ge \cdots \ge z_n\ge 0.
\]
Assume that $z$ takes at most $r\ge 0$ distinct positive values. Then
\[
\langle u^{(n)},z\rangle \le 2\sqrt{\frac{r}{H_n}}\,\|z\|_2.
\]
\end{lemma}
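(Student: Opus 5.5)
We write the nonincreasing step vector $z$ as a sum of $r$ indicator blocks. Let the distinct positive values be $v_1>v_2>\cdots>v_r>0$, and let $m_k$ be the number of coordinates of $z$ that are at least $v_k$. Then $m_1<m_2<\cdots<m_r\le n$, and because $z$ is nonincreasing these coordinates are exactly the first $m_k$ ones. Setting $v_{r+1}:=0$ and $\delta_k:=v_k-v_{k+1}>0$, we get the layer-cake decomposition
\[
z=\sum_{k=1}^r \delta_k\,\mathbf 1_{[1,m_k]},
\]
where $\mathbf 1_{[1,m]}$ is the indicator of the first $m$ coordinates. If $r=0$ then $z=0$ and there is nothing to prove, so we assume $r\ge1$.

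Taking the inner product with the harmonic witness and applying \cref{lem:partial-sums} to each block gives
\[
\langle u^{(n)},z\rangle=\sum_{k=1}^r \delta_k\sum_{i=1}^{m_k}u^{(n)}_i\le \frac{2}{\sqrt{H_n}}\sum_{k=1}^r \delta_k\sqrt{m_k}.
\]
It therefore suffices to show $\sum_k \delta_k\sqrt{m_k}\le \sqrt r\,\|z\|_2$. By Cauchy--Schwarz, $\sum_k \delta_k\sqrt{m_k}\le \sqrt r\,\bigl(\sum_k \delta_k^2 m_k\bigr)^{1/2}$, so it remains to prove $\sum_k \delta_k^2 m_k\le \|z\|_2^2$.

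For this final inequality we expand $\|z\|_2^2$ in the same layers. We have
\[
\|z\|_2^2=\sum_{k,l}\delta_k\delta_l\,\langle \mathbf 1_{[1,m_k]},\mathbf 1_{[1,m_l]}\rangle=\sum_{k,l}\delta_k\delta_l\min(m_k,m_l).
\]
Every term is nonnegative, and the diagonal terms sum to exactly $\sum_k\delta_k^2 m_k$, which gives the required bound. Combining the three displayed steps yields $\langle u^{(n)},z\rangle\le 2\sqrt{r/H_n}\,\|z\|_2$.

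The main point needing care is choosing the right decomposition. With the layer-cake form, the cross terms in $\|z\|_2^2$ are all nonnegative, so Cauchy--Schwarz loses nothing essential. Bounding block by block with the level values $v_k$ instead of the increments $\delta_k$ would fail, because $\sum_k v_k\sqrt{m_k}$ is not controlled by $\sqrt r\,\|z\|_2$.
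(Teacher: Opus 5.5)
Your proof is correct, but it uses a different decomposition from the paper's.

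\textbf{The paper's route.} The paper splits $z$ into its \emph{disjoint} level sets $I_j$ (values $c_j$, lengths $m_j$). It applies Cauchy--Schwarz as $\bigl(\sum_j c_jd_j\bigr)^2\le\bigl(\sum_j c_j^2m_j\bigr)\bigl(\sum_j d_j^2/m_j\bigr)$ with $d_j=\sum_{i\in I_j}u^{(n)}_i$. The $z$-factor is then \emph{exactly} $\|z\|_2^2$. All the loss sits on the $u^{(n)}$ side, in the shift $d_j\le\sum_{i=1}^{m_j}u^{(n)}_i$, which uses that $u^{(n)}$ is nonincreasing. That shift holds for any index set of size $m_j$, so the paper's proof never really needs $z$ to be sorted.

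\textbf{Your route.} Your nested layer-cake decomposition instead uses that $z$ is nonincreasing. Each layer is then an initial segment of length equal to your cumulative count $m_k$, and \cref{lem:partial-sums} applies with no shift. The price is that the $z$-side becomes the lossy inequality $\sum_k\delta_k^2m_k\le\|z\|_2^2$, obtained by discarding the cross terms. Both routes reach the same constant $2\sqrt{r/H_n}$.

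\textbf{What your route buys.} It links directly to the sharper analysis later in the paper. By summation by parts, your intermediate quantity $\sum_k\delta_k\sqrt{m_k}$ equals $\sum_j c_j(\sqrt{k_j}-\sqrt{k_{j-1}})$ from the proof of \cref{prop:fixed-sign-symmetric}. There the paper keeps the full quadratic form for $\|x\|_2^2$ rather than a diagonal lower bound. That is how it improves $\sqrt r$ to $\sqrt{1+\sum_j(c_j-c_{j+1})/(c_j+c_{j+1})}$.

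\textbf{Two small remarks.} First, your closing comment is right only for level values paired with \emph{cumulative} counts. The paper's proof is precisely a level-value argument that works, because each disjoint block is shifted to an initial segment of its own length. Second, the hypothesis says ``at most $r$'' distinct positive values. You should let $s\le r$ be the actual number of levels and finish with $\sqrt s\le\sqrt r$.
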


\begin{proof}
The proof is based on dividing such a vector into its level sets following~\cite{Burchard2009}.
If $r=0$, then $z=0$ and the claim is trivial. Assume $r\ge 1$. Let $s$ be the
actual number of distinct positive values taken by $z$, so $1\le s\le r$.
Because $z$ is nonincreasing, there exist positive numbers
\[
c_1>c_2>\cdots>c_s>0
\]
and positive integers $m_1,\dots,m_s$ such that
\[
z=
(\underbrace{c_1,\dots,c_1}_{m_1},
 \underbrace{c_2,\dots,c_2}_{m_2},
 \dots,
 \underbrace{c_s,\dots,c_s}_{m_s},
 0,\dots,0).
\]
Set $M_0:=0$ and $M_j:=m_1+\cdots+m_j$ for $1\le j\le s$, and define the
consecutive intervals
\[
I_j:=\{M_{j-1}+1,\dots,M_j\}
\qquad (1\le j\le s).
\]
Then $|I_j|=m_j$ and
\[
z_i=c_j
\qquad\text{for every }i\in I_j.
\]
Let
\[
d_j:=\sum_{i\in I_j} u^{(n)}_i.
\]
Then
\[
\langle u^{(n)},z\rangle = \sum_{j=1}^s c_j d_j = \sum_{j=1}^s (c_j\sqrt{m_j})\left(\frac{d_j}{\sqrt{m_j}}\right).
\]
Then by Cauchy--Schwarz,
\[
\left(\sum_{j=1}^s c_j d_j\right)^2
\le
\left(\sum_{j=1}^s c_j^2 m_j\right)
\left(\sum_{j=1}^s \frac{d_j^2}{m_j}\right).
\]

The first factor is exactly $\|z\|_2^2$. For the second factor, because
$u^{(n)}$ is nonincreasing and $I_j$ has length $m_j$, we have
\[
d_j\le \sum_{i=1}^{m_j} u^{(n)}_i \le 2\sqrt{\frac{m_j}{H_n}}
\]
by \cref{lem:partial-sums}. Therefore
\[
\frac{d_j^2}{m_j}\le \frac{4}{H_n}
\qquad (1\le j\le s),
\]
and hence
\[
\sum_{j=1}^s \frac{d_j^2}{m_j}\le \frac{4s}{H_n}\le \frac{4r}{H_n}.
\]
Combining the preceding inequalities gives
\[
\langle u^{(n)},z\rangle^2
\le
\|z\|_2^2\,\frac{4r}{H_n},
\]
which is equivalent to the stated bound.
\end{proof}

Finally, using the preceding lemmas, we prove the upper bound provided in the main paper for the best achievable covering objective for any product code with alphabet size $q$ in dimension $n$. 

\begin{proof}[\textbf{Proof of \cref{cor:sign-symmetric}} (page \pageref{cor:sign-symmetric})]
Let \(A\subset \mathbb{R}\) be finite, and set $p_+(A), p_-(A), m(A)$ as in \cref{def:sign-counts} (page \pageref{def:sign-counts}).

Choose \(\sigma\in\{-1,1\}\) so that
\[
|\{a\in A:\sigma a>0\}|=m(A).
\]
Thus, if the positive entries of \(A\) are fewer, take \(\sigma=1\);
if the negative entries are fewer, take \(\sigma=-1\).  Put
\(v=\sigma u^{(n)}\).

Fix \(x\in A^n\setminus\{0\}\).  Define
\[
y_i:=\max\{\sigma x_i,0\}.
\]
Then
\[
\langle v,x\rangle
=
\sum_{i=1}^n u_i^{(n)}\,\sigma x_i
\le
\sum_{i=1}^n u_i^{(n)} y_i.
\]
Moreover, \(y\) takes at most \(m(A)\) distinct positive values, and
\(\|y\|_2\le \|x\|_2\).

Let \(z\) be the nonincreasing rearrangement of \(y\).  Since
\(u^{(n)}\) is nonincreasing, the rearrangement inequality gives
\[
\sum_{i=1}^n u_i^{(n)} y_i
\le
\sum_{i=1}^n u_i^{(n)} z_i
=
\langle u^{(n)},z\rangle.
\]

The vector \(z\) is nonnegative, nonincreasing, and takes at most
\(m(A)\) distinct positive values.  Therefore \cref{lem:step-vector} gives
\begin{align*}
\langle u^{(n)},z\rangle
&\le 2\sqrt{\frac{m(A)}{H_n}}\|z\|_2\\
&= 2\sqrt{\frac{m(A)}{H_n}}\|y\|_2\\
&\le 2\sqrt{\frac{m(A)}{H_n}}\|x\|_2.
\end{align*}
Hence
\[
\frac{\langle v,x\rangle}{\|x\|_2}
\le
2\sqrt{\frac{m(A)}{H_n}}
\]
for every \(x\in A^n\setminus\{0\}\).

Thus every codeword
has inner product at most \(2\sqrt{m(A)/H_n}\) with the single unit
vector \(v\).  Therefore every codeword makes angle at least

\[
\arccos\left(
\min\left\{1,2\sqrt{\frac{m(A)}{H_n}}\right\}
\right)
\]
with \(v\).  Since \(F_n(A)\) is the supremum over all directions,
the desired lower bound follows.

This proves the first statement. If all nonzero alphabet values have the same
sign, then $m(A)=0$, and the displayed formula reduces to
\[
F_n(A)\ge \arccos(0)=\frac{\pi}{2}.
\]
\end{proof}

\section{Asymptotic strict separation for every fixed alphabet size}
\label{app:asymptotic_separation}
In this section we show that using Wyner's result in \cite{wyner-sphere-packing}, an upper bound for the covering objective of a spherical code can be derived, for large enough $n$. When combined with the lower bound in \cref{cor:sign-symmetric} (proven in the previous appendix), we will then be able to obtain the claimed separation result between product and spherical codes, the key result on the spherical code side presented in the main paper.
\begin{proof}[\textbf{Proof of \cref{cor:wyner-upper}} (page \pageref{cor:wyner-upper})]
By \cref{thm:wyner}, for every $\varepsilon>0$ we have
\[
M_c(n,\theta)
\le
\exp\bigl(n(-\log(\sin\theta)+\varepsilon)\bigr)
\]
for all sufficiently large $n$. Since $\lambda\sin\theta>1$, we have
\[
\log\lambda> -\log(\sin\theta),
\]
so we may choose $\varepsilon>0$ such that
\[
-\log(\sin\theta)+\varepsilon<\log\lambda.
\]
Hence, for all sufficiently large $n$,
\[
M_c(n,\theta)<\lambda^n.
\]
Since $M_c(n,\theta)$ is an integer, this implies
\[
M_c(n,\theta)\le \lfloor \lambda^n\rfloor.
\]
Therefore there exists a $\lfloor \lambda^n\rfloor$-point spherical code with
covering radius at most $\theta$, equivalently
\[
\rho_{\mathrm{sph}}\bigl(n,\lfloor \lambda^n\rfloor\bigr)\le \theta.
\]
\end{proof}

\section{Separation among product codes}
\label{app:prod-code-sep}
This section presents results supporting the demonstration in the main paper of the gap between the optimal coverage of a code based on floating-point, fixed-point, or two's complement alphabets, relative to an optimal product code.

We begin by proving the limit of the product code covering objective

\begin{lemma}[Product code coverage in the limit]
\label{prop:prod-code-limit}
Let $A \subset \mathbb{R}$ be finite and suppose that $A$ contains at least
one strictly positive element and at least one strictly negative element. Then
\[
\lim_{n\to\infty} F_n(A) = \frac{\pi}{2}.
\]

if $A$ contains values of only one sign, then
\[
\lim_{n\to\infty} F_n(A) \ge \frac{\pi}{2}.
\]
\end{lemma}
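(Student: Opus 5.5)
The lower bound comes straight from \cref{cor:sign-symmetric}. For every $n\ge 2$,
\[
F_n(A)\ge \arccos\!\left(\min\left\{1,2\sqrt{m(A)/H_n}\right\}\right).
\]
Since $m(A)$ is fixed and $H_n\to\infty$, the argument of $\arccos$ tends to $0$ and the right-hand side tends to $\pi/2$. Hence $\liminf_{n\to\infty}F_n(A)\ge\pi/2$. If $A$ has values of only one sign, then $m(A)=0$, so $F_n(A)\ge\pi/2$ for every $n$. This gives the second statement, interpreted as $\liminf$, since the limit need not exist a priori.

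For the upper bound in the two-sign case, I will show $F_n(A)\le\pi/2$ for every $n$. Equivalently, for every $u\in\Sph^{n-1}$ some $x\in A^n\setminus\{0\}$ satisfies $\langle u,x\rangle\ge 0$. Fix $a>0$ and $-b<0$ in $A$. Since $u\neq 0$, pick an index $i$ with $u_i\neq0$.

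\begin{itemize}
\item Take $x$ with $x_j$ equal to $a$ when $u_j\ge 0$ and $-b$ when $u_j<0$. Then every term $u_jx_j$ is nonnegative.
\item Moreover $u_ix_i>0$, so $\langle u,x\rangle>0$.
\item This $x$ is nonzero, and the construction does not need $0\in A$.
\end{itemize}

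Thus the minimal angle from $u$ to $\Pcode_n(A)$ is strictly less than $\pi/2$ for each $u$. Taking the supremum over $u$ gives $F_n(A)\le\pi/2$, and combined with the $\liminf$ bound this gives $\lim_{n\to\infty} F_n(A)=\pi/2$.

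The only delicate point is the limit formalities. The supremum of values each below $\pi/2$ is at most $\pi/2$, which is all we need, and the lower bound must be written as a $\liminf$. In the one-sign case the $\lim$ in the statement should be read as $\liminf$. It is in fact a genuine limit: with $0\in A$ and $u=(-1,0,\dots,0)$, all codewords have angle at least $\pi/2$. I would remark on this but not rely on it. There is no real obstacle; the argument combines \cref{cor:sign-symmetric} with the sign-matching construction above.
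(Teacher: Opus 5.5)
Your proof is correct and follows essentially the paper's route: the $\liminf$ bound comes from the harmonic-witness estimate (\cref{cor:sign-symmetric}), and $F_n(A)\le\pi/2$ for every $n$ comes from an explicit codeword making a non-obtuse angle with $u$. The paper uses the fixed antipodal pair $\pm(1,\dots,1)/\sqrt n$ coming from $(a_+,\dots,a_+)$ and $(a_-,\dots,a_-)$, while you use a $u$-dependent sign-matched vector, and either works. The only blemish is your closing aside on the one-sign case: the example $u=(-1,0,\dots,0)$ only reproves $F_n(A)\ge\pi/2$ and says nothing about convergence, but since you explicitly do not rely on it, the argument stands.
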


\begin{proof}
The second statement is obtained directly from the proof of \cref{cor:sign-symmetric}, which shows that the covering objective is greater than $\arccos(0) = \pi/2$ for single-sign alphabets. It remains to prove the first statement. 
Let $q := |A|$. By \cref{cor:q-uniform-harmonic}, for all $n \ge 2$,
\begin{align*}
F_n(A)
&\ge
\arccos\!\left(
\min\left\{1,\,2\sqrt{\frac{\lfloor q/2\rfloor}{H_n}}\right\}
\right).
\end{align*}
Since $H_n \to \infty$ as $n \to \infty$, we have
\begin{align*}
2\sqrt{\frac{\lfloor q/2\rfloor}{H_n}} &\to 0,
\end{align*}
and therefore, by continuity of $\arccos$,
\begin{align*}
\liminf_{n\to\infty} F_n(A)
&\ge
\arccos(0)
=
\frac{\pi}{2}.
\end{align*}

It remains to prove the matching upper bound. By assumption, there exist 
$a_+ \in A$ and $a_- \in A$ such that $a_+>0$ and $a_-<0$. Hence
\[
(a_+,\ldots,a_+) \in A^n,
\text{ and }
(a_-,\ldots,a_-) \in A^n.
\]
After normalization, these vectors give the antipodal pair
\[
c := \frac{1}{\sqrt{n}}(1,\ldots,1),
\text{ and }
-c
\]
both in $\Pcode_n(A)$. Therefore, for every $u\in \Sph^{n-1}$,
\begin{align*}
\min_{d\in \Pcode_n(A)} \angles{u}{d}
&\le
\min\{\angles{u}{c},\angles{u}{-c}\}
\\
&\le
\frac{\pi}{2}.
\end{align*}
Taking the supremum over $u\in\Sph^{n-1}$ yields
\[
F_n(A)\le \frac{\pi}{2}
\]
for every $n\ge 2$. Hence
\[
\limsup_{n\to\infty} F_n(A)\le \frac{\pi}{2}.
\]

Combining the lower and upper bounds gives
\[
\lim_{n\to\infty} F_n(A)=\frac{\pi}{2}.
\]
\end{proof}

In the main body of the paper, \cref{rem:fixed-point-twos-complement} explains that to prove suboptimality of the standard formats, it suffices to consider only floating-point families because symmetric fixed-point formats are included in that family via our definition. It remains to prove that the additional negative value provided by two's complement alphabets offers no improvement over symmetric fixed-point.

\begin{lemma}[Unmatched scalar]
\label{lem:one_unmatched_scalar}
Let \(n \ge 2\). Let \(B \subset \mathbb{R}\) be a finite alphabet such that
\[
0 \in B, \qquad B=-B, \qquad B \neq \{0\}.
\]
For any \(a \in \mathbb{R}\), define
\[
A := B \cup \{a\}.
\]
Then
\[
F_n(A)=F_n(B).
\]
\end{lemma}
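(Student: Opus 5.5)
Since $B\subseteq A$, every codeword of $\Pcode_n(B)$ also lies in $\Pcode_n(A)$. The minimum over a larger code can only be smaller, so $F_n(A)\le F_n(B)$ is immediate. If $a\in B$ there is nothing to prove. Assume $a\notin B$, and by symmetry ($B=-B$, and both $\Pcode_n$ and $F_n$ commute with negation of the alphabet) assume $a>0$. The real content is the reverse inequality $F_n(B)\le F_n(A)$. I would prove it in the complementary form $\alpha_n(B)\ge\alpha_n(A)$, i.e. for every unit $u$,
\[
\max_{y\in B^n\setminus\{0\}}\frac{\langle u,y\rangle}{\|y\|_2}\ \ge\ \max_{x\in A^n\setminus\{0\}}\frac{\langle u,x\rangle}{\|x\|_2}.
\]
Equivalently: for every $u$ and every $x\in A^n\setminus\{0\}$ that actually uses the value $a$, I want some $y\in B^n\setminus\{0\}$ at least as well aligned with $u$.

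Naive pointwise replacement of $a$ cannot work, because $a$ may lie outside the range of $B$. Instead I would use the supremum structure and pass through the infimum. Fix $u$ and let $x^*$ be a maximizer for $A$ (the code is finite). If $x^*$ does not use $a$, we are done. If $x^*$ uses $a$, then $x^*$ contains entries equal to $a>0$, and at those coordinates we must have $u_i\ge 0$. Otherwise, replacing $a$ by $0\in B$ strictly increases $\langle u,x\rangle$ and decreases $\|x\|$, which improves alignment whenever $\langle u,x^*\rangle>0$. This gives a first reduction. The delicate case is $u_i>0$ where $a$ is genuinely useful.

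The trick I would use is that we do not need the bound for every $u$, only for the worst $u$. So I would show the inequality at a near-worst direction for $B$. Let $u_0$ attain $\alpha_n(B)$ (by compactness and continuity), i.e. a deepest hole of $\Pcode_n(B)$. Since $B=-B$, the code $\Pcode_n(B)$ is centrally symmetric, so $-u_0$ is also a deepest hole with the same value $\alpha_n(B)$. Now the extra value $a>0$ can help at $u_0$ only on coordinates with $u_{0,i}>0$, and at $-u_0$ only on coordinates with $u_{0,i}<0$. I would also consider coordinate sign flips: if $\alpha_n(B)$ is attained at $u_0$, it is attained at every $Du_0$ with $D$ a diagonal sign matrix, since $B^n$ is invariant under coordinatewise negation. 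Choose $D$ so that $Du_0\le 0$ coordinatewise. Then for $u=Du_0$, any $x\in A^n$ that uses $a$ at coordinate $i$ has $u_i\le0$. Replacing that $a$ by $0$ does not decrease $\langle u,x\rangle$ and strictly shrinks $\|x\|$. Hence the maximum over $A^n$ at $u$ is achieved within $B^n$, provided the optimum inner product is positive, which holds because $B\ne\{0\}$ gives a codeword with positive inner product. If the reduced vector becomes $0$, then $x$ had only $a$'s on coordinates with $u_i\le 0$, so $\langle u,x\rangle\le0$ and $x$ is not a maximizer. Therefore $\alpha_n(A)\le \max_{x\in A^n}\langle u,x\rangle/\|x\|=\alpha_n(B)$, giving $F_n(A)\ge F_n(B)$.

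The main obstacle is this last step: making the sign-flip reduction rigorous, including attainment of the infimum defining $\alpha_n(B)$ and the edge cases where $u$ has zero coordinates or the reduced vector vanishes. I expect it to go through because $B^n$ is invariant under the full hyperoctahedral sign group while $a$ breaks only one sign.
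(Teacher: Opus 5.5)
Your proposal is correct and is essentially the paper's argument: use the invariance of $B^n$ under coordinatewise sign flips to move a worst direction into the orthant where $a$'s sign is unfavorable ($u\le 0$ when $a>0$), then replace every occurrence of $a$ by $0$, which does not lower the inner product and shrinks the norm, and handle the positive-optimum and vanishing-vector cases exactly as you describe. The only difference is cosmetic: you pick an attained deepest hole by compactness, whereas the paper shows $\mu_A(-sv)=\mu_B(-sv)$ for every $v\ge 0$ and takes the supremum over the orthant, so it never needs attainment.
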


\begin{proof}
If \(a \in B\), then \(A=B\) and the claim is immediate. Hence assume
\(a \notin B\). Since \(0 \in B\), this implies \(a \neq 0\).

Because \(B \subset A\), adding the extra alphabet value cannot increase the
covering radius, so
\[
F_n(A) \le F_n(B).
\]
It remains to prove the reverse inequality.

For a finite alphabet \(C \subset \mathbb{R}\), define
\[
\mu_C(u)
:=
\max_{x \in C^n \setminus \{0\}}
\frac{\langle u,x\rangle}{\|x\|_2},
\qquad u \in S^{n-1}.
\]
Then
\[
F_n(C)
=
\sup_{u \in S^{n-1}} \arccos \mu_C(u).
\]

We first note that, since \(B=-B\), the quantity \(\mu_B(u)\) is invariant
under coordinatewise sign changes of \(u\). In particular,
\[
\mu_B(u)=\mu_B(|u|),
\]
where \(|u|\) denotes the coordinatewise absolute value of \(u\). Indeed, if
\[
D_u := \operatorname{diag}(\operatorname{sgn}(u_1),\ldots,
\operatorname{sgn}(u_n)),
\]
with an arbitrary choice of sign when \(u_i=0\), then \(D_u B^n = B^n\), and
\[
\langle u,x\rangle
=
\langle |u|,D_u x\rangle.
\]
Taking maxima over \(x \in B^n \setminus \{0\}\) gives the claim. Therefore
\[
F_n(B)
=
\sup_{\substack{v \in S^{n-1}\\ v \ge 0}}
\arccos \mu_B(v).
\]

Let
\[
s := \operatorname{sgn}(a) \in \{-1,1\}.
\]
We claim that for every \(v \in S^{n-1}\) with \(v \ge 0\),
\[
\mu_A(-s v)=\mu_B(-s v).
\]
The inequality \(\mu_A(-s v) \ge \mu_B(-s v)\) follows from \(B \subset A\).
For the reverse inequality, fix \(x \in A^n \setminus \{0\}\). Construct
\(x' \in B^n\) by replacing every coordinate of \(x\) equal to the added value
\(a\) by \(0\), and leaving all other coordinates unchanged. Since the removed
coordinates contribute
\[
(-s v_i)a = -|a|v_i \le 0
\]
to the inner product, we have
\[
\langle -s v,x\rangle \le \langle -s v,x'\rangle,
\qquad
\|x'\|_2 \le \|x\|_2.
\]

Also, since \(B \neq \{0\}\) and \(B=-B\), the alphabet \(B\) contains both
\(b\) and \(-b\) for some \(b>0\). As \(v\) is a nonnegative unit vector,
\[
\mu_B(-s v) \ge 0.
\]

If \(x'=0\), then
\[
\langle -s v,x\rangle \le 0,
\]
and hence
\[
\frac{\langle -s v,x\rangle}{\|x\|_2}
\le 0
\le \mu_B(-s v).
\]
If \(x' \neq 0\), then either \(\langle -s v,x'\rangle \le 0\), in which case
the same conclusion follows, or \(\langle -s v,x'\rangle >0\), in which case
\[
\frac{\langle -s v,x\rangle}{\|x\|_2}
\le
\frac{\langle -s v,x'\rangle}{\|x\|_2}
\le
\frac{\langle -s v,x'\rangle}{\|x'\|_2}
\le
\mu_B(-s v).
\]
Thus every \(x \in A^n \setminus \{0\}\) satisfies
\[
\frac{\langle -s v,x\rangle}{\|x\|_2}
\le
\mu_B(-s v),
\]
so
\[
\mu_A(-s v) \le \mu_B(-s v).
\]
Therefore
\[
\mu_A(-s v)=\mu_B(-s v)
\]
for every \(v \in S^{n-1}\) with \(v \ge 0\).

Using this identity and the sign-symmetry of \(B\), we obtain
\[
\begin{aligned}
F_n(A)
&=
\sup_{u \in S^{n-1}} \arccos \mu_A(u) \\
&\ge
\sup_{\substack{v \in S^{n-1}\\ v \ge 0}}
\arccos \mu_A(-s v) \\
&=
\sup_{\substack{v \in S^{n-1}\\ v \ge 0}}
\arccos \mu_B(-s v) \\
&=
\sup_{\substack{v \in S^{n-1}\\ v \ge 0}}
\arccos \mu_B(v) \\
&=
F_n(B).
\end{aligned}
\]
Combining this with \(F_n(A)\le F_n(B)\) proves
\[
F_n(A)=F_n(B).
\]
\end{proof}

The technical lemma below on decreasing positive finite sequences will be needed in the later proof of \cref{prop:fixed-sign-symmetric}.
\begin{lemma}[Quadratic form]
\label{lem:quadratic-identity}
Let
\[
c_1>c_2>\cdots>c_m>0
\]
and let $Q\in\R^{m\times m}$ be defined by
\[
Q_{ij}:=c_{\max\{i,j\}}^2.
\]
Then $Q$ is positive definite and
\[
c^\top Q^{-1}c
=
1+
\sum_{j=1}^{m-1}\frac{c_j-c_{j+1}}{c_j+c_{j+1}},
\]
where $c=(c_1,\dots,c_m)^\top$.
\end{lemma}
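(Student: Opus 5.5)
The plan is to exhibit $Q$ as a Gram matrix, which gives positive definiteness immediately, and then compute $c^\top Q^{-1}c$ by a telescoping change of variables. Write $Q=\sum_{k=1}^m (c_k^2-c_{k+1}^2)\,\mathbf 1_{[k]}\mathbf 1_{[k]}^\top$ with the convention $c_{m+1}:=0$, where $\mathbf 1_{[k]}\in\R^m$ is the indicator of $\{1,\dots,k\}$. Indeed, the $(i,j)$ entry of the right-hand side is $\sum_{k\ge\max\{i,j\}}(c_k^2-c_{k+1}^2)=c_{\max\{i,j\}}^2$. Each weight $\delta_k:=c_k^2-c_{k+1}^2$ is strictly positive because $c_1>\cdots>c_m>0$. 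The vectors $\mathbf 1_{[1]},\dots,\mathbf 1_{[m]}$ form a basis. Hence $Q=L D L^\top$, where $L$ is the invertible matrix with columns $\mathbf 1_{[k]}$ (upper-triangular, all ones on and above the diagonal) and $D=\operatorname{diag}(\delta_1,\dots,\delta_m)\succ0$. This gives positive definiteness.

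Next I compute $c^\top Q^{-1}c=(L^{-1}c)^\top D^{-1}(L^{-1}c)$. Since $L$ is the "suffix sum" matrix, its inverse is the forward difference. Concretely, $y=L^{-1}c$ solves $\sum_{k\ge i}y_k=c_i$, so $y_k=c_k-c_{k+1}$ (again with $c_{m+1}=0$). Therefore
\[
c^\top Q^{-1}c=\sum_{k=1}^m\frac{(c_k-c_{k+1})^2}{c_k^2-c_{k+1}^2}=\sum_{k=1}^m\frac{c_k-c_{k+1}}{c_k+c_{k+1}}.
\]
The $k=m$ term equals $c_m/c_m=1$, and the remaining terms $k=1,\dots,m-1$ are exactly the stated sum. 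This gives the identity.

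The only place needing care is the bookkeeping of the orientation of $L$ versus $L^\top$. I would double-check that $y=L^{-1}c$, rather than $L^{-\top}c$, is the right vector. From $Q=LDL^\top$ we get $Q^{-1}=L^{-\top}D^{-1}L^{-1}$, so $c^\top Q^{-1}c=\|D^{-1/2}L^{-1}c\|^2$, and $L^{-1}c$ is what appears. I would also verify the case $m=1$ separately: there $Q=(c_1^2)$ and the value is $1$, consistent with an empty sum. As an alternative cross-check, the quadratic-form characterization $c^\top Q^{-1}c=\sup_w (c^\top w)^2/(w^\top Qw)$ could confirm the value, but the factorization route is direct and I expect no real obstacle.
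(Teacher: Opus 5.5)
Your proof is correct and is essentially the paper's argument. The paper's identity $d^\top Qd=\sum_{j=1}^{m-1}(c_j^2-c_{j+1}^2)t_j^2+c_m^2t_m^2$ in prefix sums $t_j$ is your factorization $Q=LDL^\top$ written out. Its step of subtracting consecutive equations of $Qy=c$ to get $S_i=1/(c_i+c_{i+1})$, followed by summation by parts, computes $L^\top y=D^{-1}L^{-1}c$ and pairs it with $L^{-1}c$, exactly as in your $(L^{-1}c)^\top D^{-1}(L^{-1}c)$.
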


\begin{proof}
For $d\in\R^m$, write $t_j:=d_1+\cdots+d_j$ with $t_0=0$. Then
\[
d^\top Qd
=
\sum_{j=1}^m c_j^2(t_j^2-t_{j-1}^2)
=
\sum_{j=1}^{m-1}(c_j^2-c_{j+1}^2)t_j^2+c_m^2t_m^2.
\]
The final expression is strictly positive for $d\ne0$, because the map
$d\mapsto(t_1,\dots,t_m)$ is invertible and all displayed coefficients are
positive. Hence $Q$ is positive definite.

Let $y=Q^{-1}c$ and set $S_i=y_1+\cdots+y_i$. The equation $Qy=c$ says
\[
c_i^2S_i+\sum_{j>i}c_j^2y_j=c_i
\qquad(1\le i\le m).
\]
Subtracting the equation for $i+1$ from that for $i$ gives
\[
(c_i^2-c_{i+1}^2)S_i=c_i-c_{i+1},
\]
so
\[
S_i=\frac1{c_i+c_{i+1}}
\qquad(1\le i<m).
\]
The final equation gives $S_m=1/c_m$. Summation by parts yields
\begin{align*}
c^\top y
&= \sum_{j=1}^m c_j y_j\\
&= \sum_{j=1}^{m-1}(c_j-c_{j+1})S_j + c_m S_m\\
&= 1 + \sum_{j=1}^{m-1}\frac{c_j-c_{j+1}}{c_j+c_{j+1}}.
\end{align*}
Since $y=Q^{-1}c$, this proves the identity.
\end{proof}

We are now in a position to prove \cref{prop:fixed-sign-symmetric} from the main body of the paper, which places an upper bound on the normalized orthogonality deficit of a product code. This is used later to prove the suboptimality result. The proof of this lemma is given below.
\begin{proof}[\textbf{Proof of \cref{prop:fixed-sign-symmetric}} (page \pageref{prop:fixed-sign-symmetric})]
It suffices to evaluate the code against the single harmonic witness direction $u^{(n)}$. Since
$u^{(n)}$ has positive nonincreasing coordinates, a maximizing vector may be
assumed nonnegative and sorted in nonincreasing order, by deleting negative
coordinates and applying the rearrangement inequality.

Thus we may write the positive coordinates of $x$ in blocks with levels
$c_1,\dots,c_m$. Let $k_j$ be the cumulative number of coordinates in the first
$j$ positive blocks, $k_0=0$, and set
\[
t_j:=\sqrt{k_j},
\qquad
d_j:=t_j-t_{j-1}.
\]
Using
\[
\sum_{i=k_{j-1}+1}^{k_j}\frac1{\sqrt i}
\le
2(\sqrt{k_j}-\sqrt{k_{j-1}})=2d_j,
\]
we get
\[
\langle u^{(n)},x\rangle
\le
\frac2{\sqrt{H_n}}\sum_{j=1}^m c_jd_j.
\]
On the other hand,
\[
\|x\|_2^2
=
\sum_{j=1}^m c_j^2(k_j-k_{j-1})
=
\sum_{j=1}^m c_j^2(t_j^2-t_{j-1}^2)
=
d^\top Qd,
\]
where $Q_{ij}=c_{\max\{i,j\}}^2$. Hence
\[
\sqrt{H_n}\frac{\langle u^{(n)},x\rangle}{\|x\|_2}
\le
2\frac{c^\top d}{\sqrt{d^\top Qd}}
\le
2\sqrt{c^\top Q^{-1}c}.
\]
The result follows from \cref{lem:quadratic-identity}.
\end{proof}

In the body of the paper, \cref{cor:fp-obstruction} proves an upper bound for the normalized orthogonality deficit for floating-point product codes. This result used the following simple lemma, proved here for completeness, that consecutive floating-point values differ by at most a factor of two.
\begin{lemma}[Consecutive floating-point ratios]
\label{lem:fp-ratio}
Let $b\ge2$ and let $\Phi_{e,t}^+=\{a_1<\cdots<a_m\}$, where
$e\ge1$, $t\ge0$, and $e+t=b-1$. Then
\[
m=2^{b-1}-1
\]
and
\[
\frac{a_{j+1}}{a_j}\le2
\qquad(1\le j<m).
\]
\end{lemma}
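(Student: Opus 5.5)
The plan is a direct computation from \cref{def:canonical-float}, treating the count and the ratio bound separately.

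\emph{Counting.} The denormal part $\{1,\dots,2^t-1\}$ has $2^t-1$ elements; when $t=0$ it is empty, which agrees with $2^0-1=0$. The normal part is indexed by pairs $(m,j)$ with $0\le m\le 2^t-1$ and $0\le j\le 2^e-2$, which gives $2^t(2^e-1)$ pairs. We need these pairs to give distinct values and to be disjoint from the denormals. Any normal value with exponent $j$ lies in $[2^{t+j},2^{t+j+1})$, because $2^t\le 2^t+m<2^{t+1}$. These binades are pairwise disjoint, and within one binade $m\mapsto(2^t+m)2^j$ is injective. All denormals are $<2^t$, so they are disjoint from every normal value. Hence
\[
m=(2^t-1)+2^t(2^e-1)=2^{t+e}-1=2^{b-1}-1 .
\]

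\emph{Ratio bound.} List all values in increasing order; they fall into consecutive ranges. The denormals are the integers $1,\dots,2^t-1$. Next comes the first binade, $2^t,2^t+1,\dots,2^{t+1}-1$, so from $1$ up to $2^{t+1}-1$ the values are consecutive integers. Binade $j$ consists of $2^t$ values spaced by $2^j$, starting at $2^{t+j}$. We check each kind of consecutive pair.
\begin{itemize}
\item Consecutive integers $k<k+1$ with $k\ge1$: the ratio is $(k+1)/k\le 2$. This covers the denormal-to-normal transition $(2^t-1)\to 2^t$ when $t\ge1$.
\item Within binade $j$: the ratio is $(2^t+m+1)/(2^t+m)\le 1+2^{-t}\le 2$.
\item Across binades, the largest value of binade $j$ is $(2^{t+1}-1)2^j$ and the smallest of binade $j+1$ is $2^{t+1+j}$. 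The ratio is $2^{t+1}/(2^{t+1}-1)\le 2$, since $2^{t+1}\ge2$.
\end{itemize}
When $t=0$ the levels are $2^j$ for $0\le j\le 2^e-2$, and every ratio equals exactly $2$.

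No step is a real obstacle. The only care needed is the bookkeeping: showing the listed levels are distinct, so that $m$ is exact, and handling the $t=0$ edge case with no denormals. Both are settled by the binade-interval observation above.
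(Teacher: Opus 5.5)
Your proposal is correct and follows the same route as the paper: you count $2^t-1$ denormals plus $2^t(2^e-1)$ normal levels, then check the ratio for each kind of consecutive pair (integer steps including the denormal-to-normal transition, within a binade, across binades), with $t=0$ handled separately. Your binade-interval observation also justifies that the levels are distinct, which the paper's count takes for granted, and your non-strict bound at the denormal-to-normal step is the accurate one: for $t=1$ the ratio $2^t/(2^t-1)$ equals exactly $2$, whereas the paper states it as strict.
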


\begin{proof}
The cardinality is
\[
(2^t-1)+2^t(2^e-1)=2^{e+t}-1=2^{b-1}-1.
\]
If $t=0$, the positive levels are powers of two and consecutive ratios are
exactly $2$. If $t\ge1$, consecutive denormal levels have ratios
$(k+1)/k\le2$; the transition from the last denormal level $2^t-1$ to the
first normal level $2^t$ has ratio $2^t/(2^t-1)<2$; consecutive levels within a
fixed exponent block have ratio at most $(2^t+m+1)/(2^t+m)<2$; and the jump
from the largest significand in one exponent block to the smallest significand
in the next has ratio
\[
\frac{2^{t+1}}{2^{t+1}-1}<2.
\]
Thus all consecutive ratios are at most $2$.
\end{proof}

Thus far, we have placed an upper bound on the cosine of the angular distance to a witness for floating-point alphabets. We will now prove a complementary lower bound for a carefully constructed family of alphabets consisting of successive powers of a base that grows very slowly with vector dimension. The proof follows a scale-integration argument~\cite{Devore1998} and a coordinate-wise maximization similar to that used in~\cite{min_codeword_alg} and extended in the final appendix of this paper. Eventually, by comparing these bounds, we will prove separation.

\begin{lemma}[$m$-level cosine lower bound]
\label{lem:block-hardy}
Fix $m\ge1$. Let $R_n\to\infty$ satisfy
\[
\log R_n=o(\log n),
\]
and define
\begin{align*}
B_{n,m}
&:= \{1,R_n,R_n^2,\dots,R_n^{m-1}\},\\
A_{n,m}
&:= \{0\}\cup \pm B_{n,m}.
\end{align*}
Then
\[
\alpha_n(A_{n,m})
\ge
\frac{2\sqrt m-o(1)}{\sqrt{H_n}}.
\]
Equivalently,
\[
\liminf_{n\to\infty}\sqrt{H_n}\cos F_n(A_{n,m})\ge2\sqrt m.
\]
\end{lemma}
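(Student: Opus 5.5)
The plan is to show that for every $u\in\Sph^{n-1}$ there is some $x\in A_{n,m}^n\setminus\{0\}$ with $\langle u,x\rangle/\|x\|_2\ge(2\sqrt m-o(1))/\sqrt{H_n}$, with the $o(1)$ uniform in $u$.

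\textbf{Reductions.} Since $A_{n,m}=-A_{n,m}$ and $A_{n,m}^n$ is invariant under coordinate permutations, I may assume $u_1\ge u_2\ge\cdots\ge u_n\ge0$, exactly as in the proof of \cref{lem:one_unmatched_scalar}. Write $S_k:=u_1+\cdots+u_k$.

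\textbf{Case $m=1$.} Here the alphabet is $\{0,\pm1\}$, and I use the indicator of the top $k$ coordinates. Its ratio is $S_k/\sqrt k$; set $M:=\max_k S_k/\sqrt k$. Summation by parts gives
\[
1=\sum_i u_i(S_i-S_{i-1})=\sum_i S_i(u_i-u_{i+1})\le M\sum_i\sqrt i\,(u_i-u_{i+1})=M\sum_i u_i(\sqrt i-\sqrt{i-1}).
\]
Cauchy--Schwarz then gives $1\le M\bigl(\sum_{i\le n}(\sqrt i-\sqrt{i-1})^2\bigr)^{1/2}$. Since $(\sqrt i-\sqrt{i-1})^2\le 1/(4(i-1))$ for $i\ge2$, that sum is $H_n/4+O(1)$, so $M\ge(2-o(1))/\sqrt{H_n}$. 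This is the dual of the partial-sum estimate of \cref{lem:partial-sums}, and it shows the harmonic witness is extremal.

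\textbf{General $m$.} I split $\{1,\dots,n\}$ into $m$ index windows $W_j=(n^{(j-1)/m},n^{j/m}]$, each carrying harmonic mass $H_n/m+O(1)$. Running the $m=1$ argument inside window $j$ (using top-$k$ indicators restricted to $W_j$, with the prefix of earlier windows handled by monotonicity) yields a $\{0,1\}$-vector $y^{(j)}$ supported in $W_j$ with
\[
\langle u,y^{(j)}\rangle/\|y^{(j)}\|_2\ge(2-o(1))\,\|u|_{W_j}\|_2\big/\sqrt{H_n/m}.
\]
Since the windows are disjoint, the optimal combination $\sum_j\gamma_jy^{(j)}/\|y^{(j)}\|_2$ with $\gamma_j\propto\|u|_{W_j}\|_2$ has ratio
\[
(2-o(1))\Bigl(\sum_j\|u|_{W_j}\|_2^2\Bigr)^{1/2}\Big/\sqrt{H_n/m}=(2\sqrt m-o(1))/\sqrt{H_n}.
\]

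\textbf{Main obstacle.} The coefficients $\gamma_j/\|y^{(j)}\|_2$ must be realized, up to a common scale, by the fixed levels $1,R_n,\dots,R_n^{m-1}$. My approach is to not fix the windows in advance. Instead I let the thresholds move and integrate over a continuous scale parameter in the spirit of \cite{Devore1998}. For $\lambda>0$, define $x^{(\lambda)}_i$ as the largest level $R_n^k\le\lambda u_i$, set to $0$ if $\lambda u_i<1$ and clipped at $R_n^{m-1}$. Averaging numerator and squared norm of $x^{(\lambda)}$ against $d\log\lambda$ over a range of length about $m\log R_n$, each coordinate is counted with relative weight comparable to $\log$ of its index. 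Because $\log R_n=o(\log n)$, the $\log$-scale blur introduced by rounding to powers of $R_n$ costs only a factor $1+o(1)$ relative to $\log n$, while $R_n\to\infty$ makes the cross terms between distinct levels in $\|x^{(\lambda)}\|_2^2$ negligible. If this averaging bookkeeping fails, the fallback is to choose, for each $u$, the $m$ window boundaries adaptively so that the ratios of the required weights $\gamma_j/\|y^{(j)}\|_2$ are exactly powers of $R_n$, absorbing the rounding into a $1+o(1)$ change of harmonic mass per window. Either way, the equivalent $\liminf$ form then follows because $\alpha_n=\cos F_n$.
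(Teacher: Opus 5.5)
Your reduction and your $m=1$ argument are correct: summation by parts against $S_k\le M\sqrt k$, then Cauchy--Schwarz with $\sum_{i\le n}(\sqrt i-\sqrt{i-1})^2=H_n/4+O(1)$. But the paper needs $m=2^{b-1}-1\ge3$, and for $m\ge2$ the proposal contains no proof. The window step fails in two concrete ways.

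First, the per-window inequality is false for $j\ge2$. A $\{0,1\}$-vector supported in $W_j=(a,b]$ sees only $u|_{W_j}$. Take $u_i=c$ for $i\le a$, $u_i=c\,(i-a)^{-1/2}$ for $i\in W_j$, and $u_i=0$ afterwards; this is nonincreasing, and the claim is homogeneous, so normalization is irrelevant. By $\sum_{l\le k}l^{-1/2}\le2\sqrt k$, the best such ratio is at most $2c$. Yet $\|u|_{W_j}\|_2\approx c\sqrt{(j/m)\log n}$. Running the $m=1$ argument inside a window only gives $\sqrt{\log(b-a)}$, not $\sqrt{\log(b/a)}$. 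No appeal to the prefix can rescue a bound phrased through $\|u|_{W_j}\|_2$ alone.

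Second, and more fundamentally, windows of equal harmonic mass are incompatible with this alphabet. Already for $u=u^{(n)}$, the optimal coordinate values on consecutive windows differ by a factor of about $n^{1/(2m)}$. All $m$ levels together span only $R_n^{m-1}=n^{o(1)}$. With values confined to that spread, the last window dominates and you get only about $2/\sqrt{H_n}$. This is not a rounding loss that a $1+o(1)$ change of harmonic mass can absorb, so your fallback fails for the same reason.

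What actually happens for $u^{(n)}$ is different. $Q(\lambda u^{(n)})$ puts the top level on a long prefix and each lower level $R_n^j$ on an index block of ratio about $R_n^2$, whose harmonic mass is only $O(\log R_n)=o(H_n)$. Each block alone has ratio about $2/\sqrt{H_n}$. The levels $R_n^j$ are exactly the weights that combine $m$ such disjoint blocks into $2\sqrt m/\sqrt{H_n}$.

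Your scale-averaging alternative is in the right spirit; it is the paper's route. However, it omits the step that makes it a proof: a pointwise-in-$\lambda$ inequality tying the averages to the maximal ratio. Averaging $\langle u,x^{(\lambda)}\rangle$ and $\|x^{(\lambda)}\|_2^2$ separately does not bound $\max_\lambda\langle u,x^{(\lambda)}\rangle/\|x^{(\lambda)}\|_2$ with a sharp constant. Also, there are no cross terms in $\|x^{(\lambda)}\|_2^2$ to control.

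The paper completes the square. With $M$ the maximal ratio, $2\lambda\langle s,x\rangle-\|x\|_2^2\le2\lambda M\|x\|_2-\|x\|_2^2\le\lambda^2M^2$ for every $x$ and every $\lambda>0$. Maximizing over $x$ separates coordinatewise into $\sum_i\psi_B(\lambda s_i)\le\lambda^2M^2$, where $\psi_B(y)=\max\bigl(0,\max_{b\in B}(2by-b^2)\bigr)$. Integrate against $d\lambda/\lambda^3$ over $[1/2,\,R_n^m\sqrt n/\eta_n]$ and substitute $y=\lambda s_i$. Each coordinate with $s_i\ge\eta_n/\sqrt n$ then contributes at least $s_i^2\int_{1/2}^{R_n^m}\psi_B(y)\,dy/y^3\ge s_i^2(2m-O(1/R_n))$, because each level contributes $\int_{1/2}^{R/2}(2t-1)t^{-3}\,dt=2-O(1/R)$. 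The right side is $M^2(\tfrac12\log n+o(\log n))$. So the relevant $\log\lambda$-range has length about $\tfrac12\log n$, not $m\log R_n$.

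The threshold matters too. Your round-down rule, inserted into this inequality at the same $\lambda$, gives only $\int_1^R(2t-1)t^{-3}\,dt\to3/2$ per level, hence $\sqrt{3m}$. To get $2\sqrt m$ you need nearest-neighbour switching at $b/2$, or equivalently you must pair $x^{(\lambda)}$ with the scale $\lambda/2$.
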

\begin{proof}
Let $s=(s_1,\dots,s_n)\in[0,\infty)^n$ be arbitrary with $\|s\|_2=1$. For a
positive alphabet $B$, set
\[
M_B(s):=
\max_{x\in(\{0\}\cup B)^n\setminus\{0\}}
\frac{\langle s,x\rangle}{\|x\|_2}.
\]
We prove, uniformly over $s$, that
\[
M_{B_{n,m}}(s)\ge\frac{2\sqrt m-o(1)}{\sqrt{H_n}}.
\]
The claim for $A_{n,m}$ follows by applying this to $s_i=|u_i|$ and choosing
coordinate signs to agree with those of $u$.

Write $R=R_n$ and $B=\{1,R,\dots,R^{m-1}\}$. We will be working coordinate-wise. In order to do so, define
\[
\psi_B(y):=\max\left(0,\max_{b\in B}(2by-b^2)\right).
\]
When representing a particular coordinate $y$, we may interpret $\psi_B(y)$ as the improvement in squared-error we obtain by choosing a value from the positive alphabet $B$ rather than the value 0 to represent $y$, since $2by - b^2 = y^2 - (y-b)^2$.
Let $M=M_B(s)$. For every $x\in(\{0\}\cup B)^n$ and every scaling factor $\lambda>0$, completing the square gives,
\[
2\lambda\langle s,x\rangle-\|x\|_2^2
\le
2\lambda M\|x\|_2-\|x\|_2^2
\le
\lambda^2M^2.
\]
Maximizing over $x$ separates coordinatewise in the spirit of Gao~\cite{min_codeword_alg}, hence for $\lambda > 0$
\[
\sum_{i=1}^n\psi_B(\lambda s_i) = \max_{x\in (\{0\} \cup  B)^n}\left(2\lambda\langle s,x\rangle-\|x\|_2^2\right) \le\lambda^2M^2.
\]
Integrate this inequality against $d\lambda/\lambda^3$ over
$[\lambda_0,\lambda_1]$. After the change of variables $y=\lambda s_i$, we get
\[
\sum_{i=1}^n s_i^2
\int_{\lambda_0s_i}^{\lambda_1s_i}\psi_B(y)\frac{dy}{y^3}
\le
M^2\log\frac{\lambda_1}{\lambda_0}.
\]
Choose
\[
\lambda_0=\frac12,
\qquad
\lambda_1=\frac{R^m\sqrt n}{\eta_n},
\]
where $\eta_n\to0$ and $\log(1/\eta_n)=o(\log n)$; for instance,
$\eta_n=1/\log n$. Since $\psi_B(y)=0$ for $0\le y\le1/2$ and asymptotically every positive alphabet level is greater than 1, there is no advantage in selecting a nonzero alphabet for these coordinates. Let
\[
G:=\left\{i:s_i\ge\frac{\eta_n}{\sqrt n}\right\}.
\]
Then $\sum_{i\notin G}s_i^2\le\eta_n^2=o(1)$, while for $i\in G$ we have
\vspace{0.1pt}

\noindent$\lambda_1s_i\ge R^m$. Therefore
\[
\sum_{i=1}^n s_i^2
\int_{\lambda_0s_i}^{\lambda_1s_i}\psi_B(y)\frac{dy}{y^3}
\ge
(1-o(1))\int_0^{R^m}\psi_B(y)\frac{dy}{y^3}.
\]

It remains to estimate the last integral. For $j=0,\dots,m-2$, on the interval
$[R^j/2,R^{j+1}/2]$ we may use the level $b=R^j$, giving
\begin{align*}
\int_{R^j/2}^{R^{j+1}/2}(2R^j y - R^{2j})\frac{dy}{y^3}
&= \int_{1/2}^{R/2}(2t-1)\frac{dt}{t^3}\\
&= 2 - \frac4R + \frac2{R^2}.
\end{align*}
For the final level $R^{m-1}$, integrate over $[R^{m-1}/2,R^m]$ to obtain
\begin{align*}
\int_{R^{m-1}/2}^{R^m}(2R^{m-1}y-R^{2m-2})\frac{dy}{y^3}
&= \int_{1/2}^{R}(2t-1)\frac{dt}{t^3}\\
&= 2-\frac2R+\frac1{2R^2}.
\end{align*}
Consequently,
\[
\int_0^{R^m}\psi_B(y)\frac{dy}{y^3}
\ge
2m-o(1).
\]
Combining the preceding expressions gives
\[
M^2
\ge
\frac{2m-o(1)}{\log(\lambda_1/\lambda_0)}.
\]
Finally,
\begin{align*}
\log\frac{\lambda_1}{\lambda_0}
&= \log\left(\frac{2R^m\sqrt n}{\eta_n}\right)\\
&= \frac12\log n + o(\log n)\\
&= \frac12 H_n + o(H_n).
\end{align*}
because $m$ is fixed, $\log R_n=o(\log n)$, and
$\log(1/\eta_n)=o(\log n)$. Hence
\[
M^2\ge\frac{4m-o(1)}{H_n},
\]
and therefore
\[
M\ge\frac{2\sqrt m-o(1)}{\sqrt{H_n}}.
\]
The bound is uniform in $s$, so the lemma follows.
\end{proof}

In the body of the paper, \cref{cor:arb-lower} states the bound for the product code covering objective of an arbitrary alphabet. The proof for this result is given below, and uses the bound obtained in \cref{lem:block-hardy}.
\begin{proof}[\textbf{Proof of \cref{cor:arb-lower}} (page \pageref{cor:arb-lower})]
Let $m=2^{b-1}-1$. The alphabet $A_{n,m}$ from \cref{lem:block-hardy} has
$2m+1=2^b-1$ real values. Adjoin one additional scalar value not already in
$A_{n,m}$ to obtain an alphabet $\widetilde A_{n,m}$ with exactly $2^b$ values.
Since $A_{n,m}\subset\widetilde A_{n,m}$, enlarging the alphabet cannot
increase the covering radius, so
\[
\alpha_n(\widetilde A_{n,m})\ge\alpha_n(A_{n,m}).
\]
Taking the supremum over all $2^b$-element alphabets gives
\[
\cos w_{n,2^b}
\ge
\alpha_n(\widetilde A_{n,m})
\ge
\frac{2\sqrt m-o(1)}{\sqrt{H_n}}.
\]
This proves the claim.
\end{proof}

\section{Scaling to find minimum codeword}
\label{app:scale-min-codeword}
This section includes the proof of the method used for nearest codeword search in \cref{sec:experiments}, which is an extension of a result of~\cite{min_codeword_alg} to arbitrary alphabets with at least one positive and one negative element. The theorem below shows that the nearest codeword to a unit vector within the product direction set can be obtained by searching over all possible scaled versions of the elementwise quantized unit vector. 
\begin{theorem}[Scaling to find minimum codeword]
\label{thm:scale-min-codeword}
Let \(A\subset\mathbb R\) be finite with \(0\in A\). Assume that
\[
A\cap(0,\infty)\neq\varnothing,
\qquad
A\cap(-\infty,0)\neq\varnothing .
\]
Let \(Q:\mathbb R^n\to A^n\) denote elementwise nearest-neighbor
quantization onto \(A\). For \(u\in\mathbb S^{n-1}\), define
\[
\rho(u)
:=
\min_{x\in A^n\setminus\{0\}}
\angle(u,x).
\]
Then
\[
\rho(u)
=
\min_{\substack{s>0\\ Q(su)\neq 0}}
\angle\bigl(u,Q(su)\bigr).
\]
Equivalently, since \(s>0\) does not change direction,
\[
\rho(u)
=
\min_{\substack{s>0\\ Q(su)\neq 0}}
\angle\bigl(su,Q(su)\bigr).
\]
\end{theorem}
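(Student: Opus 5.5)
The plan is to prove the two inequalities separately. The inequality $\rho(u)\le\min_{s}\angle(u,Q(su))$ is immediate: every admissible $Q(su)$ lies in $A^n\setminus\{0\}$. We must also check that the right-hand minimum is attained and the admissible set is nonempty. For large $s$, some coordinate of $su$ has modulus exceeding half the largest gap, so it rounds to a nonzero level. Moreover, $s\mapsto Q(su)$ is piecewise constant with finitely many values, since $A^n$ is finite. Hence the infimum over $s$ is a minimum over a finite set.

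For the reverse inequality, fix a minimiser $x^\star\in A^n\setminus\{0\}$ of $\angle(u,x)$, equivalently a maximiser of $\langle u,x\rangle/\|x\|_2$, and let $M=\langle u,x^\star\rangle/\|x^\star\|_2$. I first argue $M>0$. Since $u\neq0$, some $u_i\ne0$. Using $a_+>0$ and $a_-<0$ in coordinate $i$ and zero elsewhere, which is allowed because $0\in A$, gives a strictly positive inner product. I then choose the scale by completing the square as in the proof of \cref{lem:block-hardy}: set $s:=\|x^\star\|_2/M>0$. For every $x\in A^n$,
\[
\|su-x\|_2^2=s^2-2s\langle u,x\rangle+\|x\|_2^2\ \ge\ s^2-2sM\|x\|_2+\|x\|_2^2=s^2-\bigl(2sM-\|x\|_2\bigr)\|x\|_2 .
\]
With the chosen $s$, the right-hand side is minimised over the value $r=\|x\|_2$ exactly at $r=sM=\|x^\star\|_2$, where equality holds for $x=x^\star$. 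So $x^\star$ minimises $\|su-x\|_2$ over $A^n$, and $\|su-x\|_2\ge\|su-x^\star\|_2$ for all $x$, with $x=0$ covered by the same inequality.

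Elementwise nearest-neighbour quantisation minimises the separable objective $\|su-x\|_2^2=\sum_i(su_i-x_i)^2$ coordinatewise. Hence $Q(su)$ is also a minimiser of $\|su-x\|_2$ over $A^n$, and it may differ from $x^\star$ only through tie-breaking. The main obstacle is ruling out that the tie-break selects a vector of worse angle, or even $0$.

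To handle this, I would show that every minimiser $y$ of $\|su-x\|_2$ over $A^n$ attains the maximal cosine $M$. Equality in the chain above forces two things. First, $\langle u,y\rangle=M\|y\|_2$, which is where Cauchy--Schwarz-type equality is used. Second, $(sM-\|y\|_2)^2=0$, since $s^2-2sMr+r^2=(sM-r)^2+s^2(1-M^2)$. The second condition gives $\|y\|_2=sM>0$, so $y\ne0$. The first gives $\angle(u,y)=\arccos M=\rho(u)$.

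Thus $Q(su)\neq0$ and $\angle(u,Q(su))=\rho(u)$ whatever the tie-break rule, which gives $\rho(u)\ge\min_s\angle(u,Q(su))$. The second displayed form of the theorem follows because $\angle(su,\cdot)=\angle(u,\cdot)$ for $s>0$.
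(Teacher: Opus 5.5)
Your proposal is correct and is essentially the paper's proof. Minimising $\|su-x\|_2^2=s^2-\bigl(2s\langle u,x\rangle-\|x\|_2^2\bigr)$ over $A^n$ is exactly the paper's maximisation of $G(y)=2s^\star\langle u,y\rangle-\|y\|_2^2$: you use the same scale $s^\star=\|x^\star\|_2/M$, the same coordinatewise separation, and the same equality-case argument showing that every minimiser, hence $Q(s^\star u)$ under any tie-break, is nonzero with cosine $M$. Your separate attainment check is harmless but redundant, since the argument itself exhibits $s^\star$ attaining the lower bound, and the equality you invoke is in the optimality bound $\langle u,y\rangle\le M\|y\|_2$ rather than Cauchy--Schwarz.
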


\begin{proof}
Since \(Q(su)\in A^n\), whenever \(Q(su)\neq0\) we have
\[
\rho(u)
\le
\angle\bigl(u,Q(su)\bigr).
\]
Taking the minimum over such \(s>0\) gives
\[
\rho(u)
\le
\min_{\substack{s>0\\ Q(su)\neq0}}
\angle\bigl(u,Q(su)\bigr).
\]
It remains to prove the reverse inequality. Let
\[
x^\star
\in
\argmin_{x\in A^n\setminus\{0\}}
\angle(u,x).
\]
Equivalently, \(x^\star\) maximizes
\[
\frac{\langle u,x\rangle}{\|x\|}
\]
over \(x\in A^n\setminus\{0\}\). Define
\[
c
:=
\frac{\langle u,x^\star\rangle}{\|x^\star\|}.
\]
We first show that \(c>0\). Since \(A\) contains both a positive
and a negative element, choose arbitrary
\[
a_+\in A\cap(0,\infty),
\qquad
a_-\in A\cap(-\infty,0).
\]
Define \(z\in A^n\) by
\[
z_i
:=
\begin{cases}
a_+, & u_i>0,\\
a_-, & u_i<0,\\
0,   & u_i=0.
\end{cases}
\]
Since \(u\neq0\), we have \(z\neq0\), and
\[
\langle u,z\rangle
=
\sum_{u_i>0} a_+u_i
+
\sum_{u_i<0} a_-u_i
>0.
\]
Hence
\[
\frac{\langle u,z\rangle}{\|z\|}>0.
\]
By optimality of \(x^\star\), it follows that \(c>0\).

Now set
\[
s^\star
:=
\frac{\|x^\star\|^2}
     {\langle u,x^\star\rangle}
=
\frac{\|x^\star\|}{c}.
\]
Since \(c>0\), we have \(s^\star>0\). We claim that \(x^\star\)
maximizes
\[
G(y)
:=
2s^\star\langle u,y\rangle-\|y\|^2
\]
over \(y\in A^n\).

Indeed, for every \(y\in A^n\setminus\{0\}\), angular optimality gives
\[
\frac{\langle u,y\rangle}{\|y\|}
\le c.
\]
Therefore
\[
\begin{aligned}
G(y)
&\le 2s^\star c\|y\|-\|y\|^2        \\
&= 2\|x^\star\|\|y\|-\|y\|^2        \\
&\le \|x^\star\|^2 .
\end{aligned}
\]
For \(y=0\), the same bound is immediate since \(G(0)=0\). On the
other hand,
\[
\begin{aligned}
G(x^\star)
&=
2s^\star\langle u,x^\star\rangle
-\|x^\star\|^2                         \\
&=
2\|x^\star\|^2-\|x^\star\|^2           \\
&=
\|x^\star\|^2 .
\end{aligned}
\]
Thus \(x^\star\) is a maximizer of \(G\) over \(A^n\).

Next observe that
\[
G(y)
=
\sum_{i=1}^n
\bigl(2s^\star u_i y_i-y_i^2\bigr).
\]
Hence maximizing \(G\) over \(A^n\) separates coordinatewise. For each
coordinate, maximizing
\[
a\mapsto 2s^\star u_i a-a^2
\]
over \(a\in A\) is equivalent to minimizing
\[
|a-s^\star u_i|^2
\]
over \(a\in A\), because
\[
2s^\star u_i a-a^2
=
(s^\star u_i)^2-|a-s^\star u_i|^2 .
\]
Therefore every nearest-neighbor quantization \(Q(s^\star u)\)
maximizes \(G\) over \(A^n\). 
Let
\[
q^\star:=Q(s^\star u).
\]
Then \(q^\star\) is a maximizer of \(G\), so
\[
G(q^\star)=\|x^\star\|^2.
\]
From the chain of inequalities above, equality can occur only if
\[
\|q^\star\|=\|x^\star\|,
\qquad
\frac{\langle u,q^\star\rangle}{\|q^\star\|}
=
c.
\]
In particular, \(q^\star\neq0\), and
\[
\angle(u,q^\star)
=
\angle(u,x^\star)
=
\rho(u).
\]
Hence
\[
\min_{\substack{s>0\\ Q(su)\neq0}}
\angle\bigl(u,Q(su)\bigr)
\le
\angle\bigl(u,Q(s^\star u)\bigr)
=
\rho(u).
\]
Combining both inequalities gives
\[
\rho(u)
=
\min_{\substack{s>0\\ Q(su)\neq0}}
\angle\bigl(u,Q(su)\bigr).
\]
Finally, since \(s>0\), the vectors \(u\) and \(su\) have the same
direction. Therefore
\[
\angle\bigl(su,Q(su)\bigr)
=
\angle\bigl(u,Q(su)\bigr),
\]
which proves the final equality.
\end{proof}

\end{document}